\documentclass{article}
\usepackage{iclr2026_conference,times}
\iclrfinalcopy

\usepackage{amsmath,amssymb,amsfonts,amsthm,mathtools}

\usepackage[utf8]{inputenc}
\usepackage[T1]{fontenc}
\usepackage{hyperref}
\usepackage{url}
\usepackage{xcolor}
\usepackage{booktabs}
\usepackage{nicefrac}
\usepackage{microtype}
\usepackage{graphicx}
\usepackage{subcaption}
\usepackage{multirow}
\usepackage{wrapfig}

\usepackage{listings}
\definecolor{lstbg}{HTML}{F7F7F9}
\definecolor{lstkw}{HTML}{AA3731}
\definecolor{lststr}{HTML}{448C27}
\definecolor{lstcom}{HTML}{7D8B99}
\definecolor{lstnum}{HTML}{7A3E9D}
\lstdefinestyle{pythonabc}{
  language=Python,
  basicstyle=\ttfamily\footnotesize,
  keywordstyle=\color{lstkw}\bfseries,
  stringstyle=\color{lststr},
  commentstyle=\color{lstcom}\itshape,
  numberstyle=\tiny\color{lstcom},
  backgroundcolor=\color{lstbg},
  frame=single,
  framerule=0pt,
  rulecolor=\color{lstbg},
  xleftmargin=6pt,
  xrightmargin=6pt,
  aboveskip=6pt,
  belowskip=6pt,
  breaklines=true,
  showstringspaces=false,
  columns=fullflexible,
  keepspaces=true,
}
\newcommand{\dnred}[1]{\textcolor{red!70!black}{\raisebox{0.3ex}{\scriptsize$\downarrow$\,#1}}}
\newcommand{\upred}[1]{\textcolor{red!70!black}{\raisebox{0.3ex}{\scriptsize$\uparrow$\,#1}}}
\usepackage{algorithm}
\usepackage{algorithmic}

\usepackage[capitalize,noabbrev]{cleveref}

\usepackage{mdframed}
\mdfdefinestyle{mainthm}{
    linewidth=0.5pt,
    linecolor=black!40,
    backgroundcolor=black!3,
    innerleftmargin=10pt,
    innerrightmargin=10pt,
    innertopmargin=4pt,
    innerbottommargin=4pt,
    skipabove=6pt,
    skipbelow=6pt,
}

\theoremstyle{plain}
\newtheorem{theorem}{Theorem}[section]
\newtheorem{proposition}[theorem]{Proposition}

\theoremstyle{definition}

\theoremstyle{remark}
\newtheorem{remark}[theorem]{Remark}

\crefname{assumption}{Assumption}{Assumptions}
\Crefname{assumption}{Assumption}{Assumptions}

\newcommand{\bx}{\boldsymbol{x}}
\newcommand{\by}{\boldsymbol{y}}

\newcommand{\bv}{\boldsymbol{v}}
\newcommand{\bV}{\boldsymbol{V}}
\newcommand{\EE}{\mathbb{E}}
\newcommand{\RR}{\mathbb{R}}
\DeclareMathOperator{\Var}{Var}
\DeclareMathOperator{\Cov}{Cov}

\definecolor{colthetapop}{HTML}{017100}   
\definecolor{colTn}{HTML}{D64541}          
\definecolor{colTnabc}{HTML}{0077CC}       
\newcommand{\thetapop}{\textcolor{colthetapop}{\boldsymbol{T^*}}}
\newcommand{\Tn}{\textcolor{colTn}{\boldsymbol{T_n}}}
\newcommand{\Tone}{\textcolor{colTn}{\boldsymbol{T_1}}}
\newcommand{\Tnabc}{\textcolor{colTnabc}{\boldsymbol{T_n^{\mathrm{ABC}}}}}
\newcommand{\Tjack}{\textcolor{colTn}{\boldsymbol{T_n^{\mathrm{jack}}}}}
\newcommand{\Tloo}{\textcolor{colTn}{\boldsymbol{T_{n,-i}}}}
\newcommand{\Tboot}{\textcolor{colTn}{\boldsymbol{T_n^{*(b)}}}}
\newcommand{\Tbr}{\textcolor{colTn}{\boldsymbol{T_n^{\mathrm{boot}}}}}
\newcommand{\Tnd}{\textcolor{colTn}{\boldsymbol{T_{n,d}}}}

\newif\ificlr
\iclrtrue

\title{Analytical Correction for Subsampling Bias in Drifting Models}
\renewcommand{\thefootnote}{\fnsymbol{footnote}}
\author{Jiaru Zhang, Zeyun Deng, Juanwu Lu, Ziran Wang\footnotemark{} \setcounter{footnote}{0}, Ruqi Zhang\footnotemark{} \\
Purdue University \\
Correspondence to: \texttt{jiaru@purdue.edu}}

\begin{document}

\maketitle

\footnotetext[1]{Equal advising.}
\renewcommand{\thefootnote}{\arabic{footnote}}
\setcounter{footnote}{0}

\begin{abstract}
Drifting models are capable one-step generative models trained to follow a drifting field. The field combines attractive and repulsive softmax-weighted centroids over the data and current-generator distributions.
In practice, only a minibatch of $n$ samples from each distribution is available, and each centroid is approximated by an empirical estimate.
In this paper, we begin by showing that the minibatch centroid is in general a \emph{biased} estimator of the target centroid,
with a pointwise $O(1/n)$ bias arising from softmax self-normalization.
Correcting this bias requires the expectation over the full distribution, which is intractable.
We instead approximate the leading bias term from in-batch statistics and propose \emph{Analytical Bias Correction} (ABC), a closed-form plug-in adjustment.
We prove that ABC reduces the bias from $O(1/n)$ to $O(1/n^2)$, introduces no first-order increase in total variance,
and preserves convex-hull containment of the corrected centroid.
In practice, ABC requires only two additional lines of code and has negligible wall-time overhead under compiled execution.
Toy experiments confirm the theoretical $O(1/n)$ and $O(1/n^2)$ scaling.
On CIFAR-10, ABC reduces FID and trains faster, with the largest gains at small $n$, where the bias is most significant.
\end{abstract}

\section{Introduction}
\label{sec:intro}

Drifting models~\citep{deng2026drifting} are capable one-step generative models trained to follow a drifting field. At each generator sample, the field combines attraction toward a softmax-weighted centroid of the data distribution with repulsion from that of the current generator distribution. Despite being introduced only
recently, they have already been extended to molecular sampling, medical image synthesis, and                     
control~\citep{foffano2026rhc,hu2026boltzmann,lyu2026mri}, pointing to broad application potential.               
Unlike diffusion models or flow matching, whose target field is derived from a predefined forward process, drifting models construct this target field directly from data and generator distributions.                                                    
Since computing this field over the full distributions is prohibitive, the standard practice is to substitute a minibatch of $n$ samples from each distribution.

As already shown by \citet{deng2026drifting}, small $n$ degrades generation quality sharply.
What is less appreciated is \emph{why}.
In this paper, we start with an exact analysis of the $n{=}1$ case where the minibatch centroid reduces to a single sample, and find that its expectation reduces to the global mean rather than the softmax-weighted local average that defines the target.
For general $n$, we derive the explicit leading-order $O(1/n)$ bias of the softmax-weighted centroid at any query 
and show that the bias systematically \emph{contracts} the centroid toward a less localized average and away from the nearest data mode.
The bias is therefore not stochastic noise but a structural property of the minibatch estimator that should be \emph{corrected}.

\Cref{fig:illustration} visualizes this on a 2D toy example with three Gaussian clusters and a query (star) closer to the left cluster than the other two.
The target centroid (\textcolor{colthetapop}{\textbf{green diamond}}) is a softmax-weighted average pulled toward the nearest cluster.
A minibatch of $n{=}4$ samples (black rings) produces a biased centroid (\textcolor{colTn}{\textbf{red circle}}) shifted back toward the global mean of the three clusters,
with the residual pointing \emph{away} from the nearest cluster.
This is the centroid-contraction bias we characterize.

\begin{wrapfigure}[18]{r}{0.5\textwidth}
    \centering
    \ificlr
    \vspace{-0.4\baselineskip}
    \else
    \vspace{-1.4\baselineskip}
    \fi
    \includegraphics[width=0.5\textwidth]{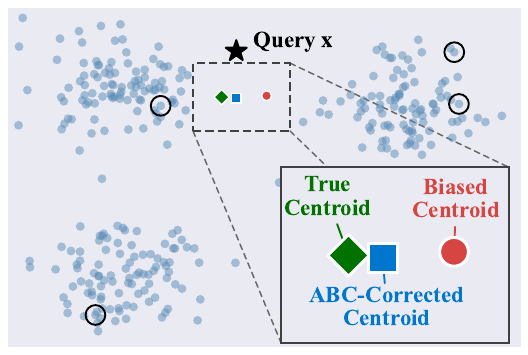}
    \caption{2D toy with three Gaussian clusters.
    From a minibatch of $n{=}4$ (black rings), the biased centroid (\textcolor{colTn}{\textbf{red circle}}) drifts away from the target centroid (\textcolor{colthetapop}{\textbf{green diamond}}), and
    ABC pulls it back (\textcolor{colTnabc}{\textbf{blue square}}).}
    \label{fig:illustration}
\end{wrapfigure}
Exact correction would require expectations over the full reference distributions, which are intractable.
We instead approximate the leading bias term from \emph{in-batch} statistics and derive \textbf{Analytical Bias Correction (ABC)}, a closed-form plug-in adjustment that reuses quantities already computed for the softmax.
Although ABC is itself estimated from in-batch statistics, we prove that it reduces the leading-order bias from $O(1/n)$ to $O(1/n^2)$, introduces no first-order increase in total variance, and preserves convex-hull containment of the corrected centroid.
\Cref{fig:illustration} shows the corrected centroid (\textcolor{colTnabc}{\textbf{blue square}}) sitting visibly closer to the target centroid than the biased one.
In practice, the correction adds two lines of code and has negligible wall-time overhead under compiled execution.
Toy experiments confirm the predicted $O(1/n)$ and $O(1/n^2)$ scaling directly,
and on CIFAR-10, ABC yields lower FID and reaches given FID targets with fewer training samples, with the largest gains at small~$n$ where the bias is most severe.
\ificlr
\else
The code is available at \url{anonymous.4open.science/r/DriftingABC-26EC/}.
\fi

In summary, our contributions are:

\begin{enumerate}
    \item We show that, at any fixed query~$\bx$, the minibatch centroid in drifting models is a biased
    estimator of the target centroid, with a pointwise $O(1/n)$ bias arising from
    softmax self-normalization.

    \item We derive ABC by approximating the leading-order bias from in-batch
    statistics, and prove it achieves $O(1/n^2)$ residual bias, convex-hull
    containment, and no first-order increase in total variance.

    \item Toy experiments confirm the theoretical $O(1/n)$ and $O(1/n^2)$ scaling
    directly.  On CIFAR-10, ABC lowers FID and accelerates convergence, with the largest gains at small~$n$.
\end{enumerate}

\subsection{Related Work}
\label{sec:related}

\paragraph{Drifting models.}
Drifting models were introduced by \citet{deng2026drifting} as one-step generators trained with kernel-based attraction-repulsion fields.
Since then, a number of concurrent works have developed theoretical views and alternative formulations~\citep{cao2026gradient,franz2026conservative,he2026sinkhorn,lai2026unified},
and extended drifting models beyond image generation to molecular sampling, medical image synthesis, and control~\citep{foffano2026rhc,hu2026boltzmann,lyu2026mri}.
Among them, the concurrent work of~\citet{lai2026unified} establishes an equivalence between drifting and score-matching under Gaussian/Laplace kernels, and in the course of this analysis, they also remark that finite-sample mini-batch drift estimation with normalized kernel weights forms a ratio-type Monte Carlo estimator and may therefore exhibit variance and bias.
However, their work focuses on the population-level connection to score matching, not on the finite-sample estimator itself.
Complementary to that work, we make the bias precise by characterizing it as a generic $O(1/n)$ centroid contraction,
and derive an analytical correction with $O(1/n^2)$ residual bias.

\paragraph{Self-normalization, minibatching, and generative transport.}
Bias in ratio estimators and self-normalized Monte Carlo estimators has long been understood.
Classical analyses characterize the leading $O(1/n)$ term,
and bias-reduction schemes such as the jackknife or Bias Reduced Self-Normalized Importance Sampling (BR-SNIS) can push the residual to $O(1/n^2)$~\citep{beale1962,cardoso2022,cochran1977,owen2013,quenouille1956}.
However, these schemes typically incur extra kernel launches and $O(nD)$ leave-one-out bookkeeping per evaluation or require auxiliary samples,
which is prohibitive in drifting models where the centroid must be recomputed at every training step.
The jackknife is additionally unstable when one sample dominates the softmax, because removing that sample produces a leave-one-out centroid far from the rest and the jackknife's bias-correction step amplifies this outlier.
Our contribution is a closed-form correction that integrates naturally with the existing softmax computation.
Related effects also arise when machine learning objectives approximate a full normalizer with sampled or truncated support, such as sampled softmax and efficient attention~\citep{bengio2008,zheng2022}.
Our setting is different from both lines of work and from flow matching~\citep{lipman2023},
as drifting trains against a softmax-weighted centroid over the \emph{entire} reference distribution, so minibatching produces a self-normalized target estimator rather than an unbiased sample average.

\section{Preliminaries}
\label{sec:preliminaries}

\textbf{The drifting model.}
The drifting model~\citep{deng2026drifting} generates samples by training-time evolution of a pushforward distribution.
At each step, a \emph{drifting field} $\bV_{p,q}(\bx)$ prescribes the target direction at each generated sample $\bx$ for training the model, depending on both the data distribution~$p$ and the current generator distribution~$q$.
The field is defined as:
$\bV_{p,q}(\bx) = \bV_p^+(\bx) - \bV_q^-(\bx)$,
where the \emph{positive} (attraction) and \emph{negative} (repulsion) components are softmax-weighted mean-shift vectors:
\begin{equation}\label{eq:meanshift}
  \bV_p^+(\bx)
  := \frac{1}{Z_p(\bx)}\,\EE_{\by^+ \sim p}\!\bigl[k(\bx,\by^+)(\by^+ - \bx)\bigr],
  \qquad
  \bV_q^-(\bx)
  := \frac{1}{Z_q(\bx)}\,\EE_{\by^- \sim q}\!\bigl[k(\bx,\by^-)(\by^- - \bx)\bigr],
\end{equation}
with normalization factors $Z_p(\bx) = \EE_p[k(\bx,\by^+)]$ and $Z_q(\bx) = \EE_q[k(\bx,\by^-)]$.
The kernel $k(\cdot,\cdot)$ can be any positive similarity function, and we adopt $k(\bx,\by) = \exp(-\|\bx-\by\|_2/\tau)$ by default with temperature~$\tau$ following~\citet{deng2026drifting}.
Intuitively, $\bV_p^+$ attracts $\bx$ toward nearby data samples and $\bV_q^-$ repels it from other generated samples, and
the combined field drives the generator distribution toward the data distribution.

\textbf{Minibatch approximation.}
In practice, the expectations over $p$ and $q$ in~\Cref{eq:meanshift} are intractable.
Instead, at each training step, $n$ reference samples $\by_1,\dots,\by_n$ are drawn as a minibatch,
and the positive field $\bV_p^+(\bx)$ is replaced by
\begin{equation}\label{eq:minibatch}
  \sum_{i=1}^{n} \alpha_i\,(\by_i - \bx),
  \qquad \text{where} \quad
  \alpha_i = \frac{k(\bx,\by_i)}{\sum_{j=1}^{n} k(\bx,\by_j)},
\end{equation}
which is a softmax-weighted combination over the minibatch.
The negative field is approximated similarly from other concurrently generated samples, excluding the query itself.

\section{The Bias Problem}
\label{sec:bias}
In this section, we first show that the minibatch estimator is biased for $n{=}1$ with an exact result.
As the exact bias admits no closed form for general~$n$, we then analyze the zero-bias conditions and quantify the leading-order bias.
Note that the positive component is a softmax-weighted mean-shift over the data distribution $p$ and the negative is its structural counterpart over the generator distribution $q$, and both have the same ratio-of-averages form. 
For simplicity, we focus on the positive side in what follows and omit the $p$ subscript on $\EE[\cdot]$. Similar statements apply to the negative side by replacing $p$ with $q$ throughout.
For ease of reference, we also present a table of notations in \Cref{app:notation}.

\textbf{Centroid reformulation.}
To simplify the analysis, we first rewrite the formulation in centroid form.
Concretely, we rewrite the mean-shift vector in~\Cref{eq:meanshift} as $\bV(\bx) = \thetapop(\bx) - \bx$, where
\begin{equation}\label{eq:centroid_pop}
  \thetapop(\bx)
  \;:=\; \frac{\EE\!\bigl[k(\bx,\by)\,\by\bigr]}
              {\EE\!\bigl[k(\bx,\by)\bigr]}
\end{equation}
is the \emph{target centroid}, i.e., the softmax-weighted average of the reference distribution.
The minibatch approximation in~\Cref{eq:minibatch} estimates $\thetapop$ by
\begin{equation}\label{eq:centroid}
  \Tn(\bx) \;:=\; \sum_{i=1}^n \alpha_i\,\by_i,
  \qquad \text{where} \quad
  \alpha_i = \frac{k(\bx,\by_i)}{\sum_{j=1}^n k(\bx,\by_j)}.
\end{equation}
Since $\bx$ is fixed, it suffices to analyze the bias of $\Tn$ relative to $\thetapop$.

\textbf{The bias in the single-sample case $n=1$.}
When $n{=}1$, a single sample is drawn from the distribution and used directly as the centroid estimate, i.e., $\alpha_1=1$ and $\Tone=\by$, and the softmax-weighted averaging is lost entirely.
The expected estimate therefore reduces to the \emph{global mean} $\EE[\by]$,
whereas the target $\thetapop$ is a \emph{softmax-weighted} average that puts more mass on points close to~$\bx$.
The bias is exactly the gap between the two.
To state it precisely, write $w = k(\bx,\by)$ for the kernel weight, and we have:
\begin{theorem}[Bias for $n{=}1$]\label{thm:bias_n1}
For a single sample $\by\sim p$ with $w=k(\bx,\by)\geq 0$, $\Pr(w>0)=1$ (so $\Tone$ is well defined), $\EE[w]\in(0,\infty)$, $\EE[\|\by\|]<\infty$, and $\EE[\|w\by\|]<\infty$:
\begin{equation}\label{eq:bias_n1}
  \EE[\Tone] - \thetapop
  \;=\; -\,\frac{\Cov(w,\,\by)}{\EE[w]},
\end{equation}
where $\Cov(w,\by) := \EE[(w-\EE w)(\by-\EE\by)] \in \RR^D$ is the componentwise covariance between the scalar $w$ and the vector $\by$.
\end{theorem}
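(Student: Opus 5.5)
The plan is a direct computation from the definitions, with no earlier result needed beyond the centroid form \Cref{eq:centroid_pop}. For $n{=}1$ the softmax weight is $\alpha_1 = w/w = 1$ on the event $\{w>0\}$, which has probability one. Hence $\Tone=\by$ almost surely, and $\EE[\Tone]=\EE[\by]$ is finite by the assumption $\EE[\|\by\|]<\infty$. On the other side, $\EE[\|w\by\|]<\infty$ and $\EE[w]\in(0,\infty)$ make $\thetapop=\EE[w\by]/\EE[w]$ well defined and finite. So both sides of \Cref{eq:bias_n1} make sense as vectors in $\RR^D$.

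Next, I will expand the covariance componentwise as
\[
\Cov(w,\by) = \EE[w\by] - \EE[w]\,\EE[\by].
\]
Each term is finite by the integrability assumptions, so the expansion of $\EE[(w-\EE w)(\by-\EE\by)]$ by linearity is justified. In particular, the cross terms $\EE[w]\EE[\by]$ are products of finite quantities. Dividing by $\EE[w]>0$ gives
\[
\frac{\Cov(w,\by)}{\EE[w]} = \frac{\EE[w\by]}{\EE[w]} - \EE[\by] = \thetapop - \EE[\Tone].
\]
Negating both sides yields \Cref{eq:bias_n1}.

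The only step needing care is integrability: checking that $(w-\EE w)(\by-\EE\by)$ is integrable, so that linearity of expectation applies without an $\infty-\infty$ issue. This follows because the expansion has four terms, $w\by$, $w\,\EE\by$, $\EE w\,\by$, and $\EE w\,\EE\by$. The first is integrable by assumption. The other three are integrable because $w$ and $\by$ are each integrable and the constants are finite. No other obstacle is expected.
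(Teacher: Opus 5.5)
Your proposal is correct and follows the paper's proof: both use $\Tone=\by$ to get $\EE[\Tone]=\EE[\by]$, write $\thetapop=\EE[w\by]/\EE[w]$, and rearrange $\EE[\by]-\EE[w\by]/\EE[w]$ into $-\Cov(w,\by)/\EE[w]$ using $\Cov(w,\by)=\EE[w\by]-\EE[w]\,\EE[\by]$. The paper leaves the integrability of the expanded covariance implicit, and your explicit check of it adds care without changing the argument.
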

\begin{proof}
For $n{=}1$ the centroid reduces to the sample itself, so $\EE[\Tone]=\EE[\by]$.
The target centroid is $\thetapop=\EE[w\by]/\EE[w]$.
Their difference is
\[
  \EE[\by] - \frac{\EE[w\by]}{\EE[w]}
  \;=\; \frac{\EE[w]\,\EE[\by]-\EE[w\by]}{\EE[w]}
  \;=\; -\,\frac{\Cov(w,\by)}{\EE[w]}.  \qedhere
\]
\end{proof}
\noindent
\ificlr
The bias is governed by $\Cov(w,\by)$, the covariance between kernel weight and sample position.
Since points closer to~$\bx$ have larger~$w$,
this covariance is generically nonzero, 
and the negative sign causes the estimator's expectation to shift away from the softmax-weighted target toward a flatter average.
\else
The bias is governed by $\Cov(w,\by)$, the covariance between kernel weight and sample position.
Since points closer to~$\bx$ receive larger~$w$,
this covariance is generically nonzero, 
and the negative sign in \Cref{eq:bias_n1} causes the estimator's expectation to shift away from the softmax-weighted target toward a flatter average.
\fi

\textbf{Conditions for the bias to vanish in general $n$.}
The $n{=}1$ result admits an exact formula because the softmax weight trivially equals one.
For general $n$, the weights $\alpha_i = k(\bx,\by_i)/\sum_j k(\bx,\by_j)$ couple all samples through a shared denominator,
and $\EE[\Tn]$ no longer has a closed-form expression.
Nonetheless, we can identify two natural sufficient conditions under which the bias vanishes:
\begin{theorem}[Zero-bias conditions]\label{thm:zero_bias}
Let $\by_1,\dots,\by_n\overset{\mathrm{i.i.d.}}{\sim}p$ with $w=k(\bx,\by)\geq 0$, $\EE[w]\in(0,\infty)$, $\EE[\|\by\|]<\infty$, $\EE[\|w\by\|]<\infty$, and $\Pr(w>0)=1$ (so $\sum_i w_i>0$ a.s.\ and $\Tn$ is well defined).
  $\EE[\Tn]=\thetapop$ if either:
  \begin{enumerate}
  \item[\rm(i)] \emph{Constant weight}: $w=k(\bx,\by)$ is constant
    a.s.\ (e.g., all points equidistant from $\bx$ for a radial
    kernel); or
  \item[\rm(ii)] \emph{Kernel-symmetric data}: $p$ is symmetric about
    $\bx$ and the kernel is reflection-invariant
    ($k(\bx,\by)=k(\bx,2\bx-\by)$; holds for any radial kernel).
  \end{enumerate}
\end{theorem}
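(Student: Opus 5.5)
Both cases are direct. First I would confirm that $\EE[\Tn]$ exists, and then compute it. The weights satisfy $\alpha_i\in[0,1]$, so $\|\Tn\|\le\sum_i\|\by_i\|$. The assumption $\EE\|\by\|<\infty$ then makes $\Tn$ integrable.

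\textbf{Case (i).} If $w\equiv c$ a.s., then $c>0$ because $\Pr(w>0)=1$. Every weight is then $\alpha_i=1/n$, so $\Tn=\frac1n\sum_i\by_i$ and $\EE[\Tn]=\EE[\by]$. On the population side the constant $c$ cancels: $\thetapop=c\,\EE[\by]/c=\EE[\by]$. The two agree. Equivalently, $\Cov(w,\by)=0$, consistent with \Cref{thm:bias_n1} at $n=1$.

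\textbf{Case (ii).} I would use a reflection-equivariance argument. Define the reflection $R(\by)=2\bx-\by$. By hypothesis $\by\overset{d}{=}R(\by)$ under $p$ and $k(\bx,R\by)=k(\bx,\by)$.

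For the target centroid, substituting $\by\mapsto R\by$ leaves $\EE[w]$ unchanged and gives $\EE[w\by]=\EE[w(2\bx-\by)]$. Hence $\EE[w\by]=\bx\,\EE[w]$, and therefore $\thetapop=\bx$.

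For the estimator, apply $R$ to all samples at once. The tuple $(R\by_1,\dots,R\by_n)$ has the same joint law as $(\by_1,\dots,\by_n)$ by independence. Each weight $w_i$ is unchanged, so the $\alpha_i$ are unchanged. Because $\sum_i\alpha_i=1$, this gives
\[
\Tn(R\by_{1:n})=\sum_i\alpha_i(2\bx-\by_i)=2\bx-\Tn(\by_{1:n}).
\]
Taking expectations and using equality in law, $\EE[\Tn]=2\bx-\EE[\Tn]$, so $\EE[\Tn]=\bx=\thetapop$.

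\textbf{Main obstacle.} The only delicate point is measure-theoretic. The step $\EE[\Tn]=2\bx-\EE[\Tn]$ requires $\EE[\Tn]$ to be finite, which the integrability bound above provides. It also requires the denominators $\sum_j w_j$ to be a.s.\ positive, which follows from $\Pr(w>0)=1$. The step $\EE[w\by]=\bx\,\EE[w]$ likewise needs $\EE\|w\by\|<\infty$, which is assumed.
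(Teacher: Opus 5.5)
Your proposal is correct and follows the paper's proof essentially step for step: in case (i) the weights are uniform, $\alpha_i=1/n$, and in case (ii) the reflection $\by\mapsto 2\bx-\by$ first gives $\thetapop=\bx$ and then $\EE[\Tn]=2\bx-\EE[\Tn]$. Your integrability bound $\|\Tn\|\le\sum_i\|\by_i\|$ is a small addition that the paper leaves implicit, and it is exactly what justifies cancelling $\EE[\Tn]$ in that last step.
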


\begin{proof}
\emph{(i).}
Constant $w$ gives $\alpha_i=1/n$, so $\Tn=n^{-1}\sum_i\by_i$ and $\EE[\Tn]=\EE[\by]=\thetapop$.

\emph{(ii).}
First, we show $\thetapop=\bx$.
By symmetry, $\EE[w(\by-\bx)] = -\EE[w(\by-\bx)]$ (the reflection $\by\mapsto 2\bx-\by$ preserves both $p$ and $w$),
so $\EE[w(\by-\bx)]=\mathbf{0}$ and $\thetapop=\EE[w\by]/\EE[w]=\bx$.
For $\Tn$, reflecting the minibatch does not change its probability or the weights, but the centroid becomes $2\bx-\Tn$.
Therefore $\EE[\Tn]=\EE[2\bx-\Tn]=2\bx-\EE[\Tn]$, giving $\EE[\Tn]=\bx=\thetapop$.
\end{proof}

\noindent
Both conditions are restrictive.
Condition (i) requires all points to be equidistant from the query,
and (ii) requires the data to be symmetric about it.
Outside these special cases, how large is the bias?
The next result answers this for general~$n$.

\textbf{The magnitude of the bias in general $n$.}
Although an exact closed form is unavailable, we can still derive the leading order of the bias, stated in the following theorem.
\begin{mdframed}[style=mainthm]
\begin{theorem}[Leading-order bias]\label{thm:ratio_bias}
  Let $\by_1,\dots,\by_n\overset{\mathrm{i.i.d.}}{\sim}p$ with
  $w=k(\bx,\by)\in[0,M]$ and $\|\by\|\leq C$ almost surely for some constants $M,C>0$, and $\Pr(w>0)=1$ (so $\sum_i w_i>0$ a.s.\ and $\Tn$ is well defined).
  For fixed $\bx$ with $\EE[w]>0$, the bias of the minibatch centroid satisfies
\begin{equation}\label{eq:ratio_bias}
  \EE[\Tn] - \thetapop
  \;=\; -\,\frac{1}{n}\,
    \frac{\EE\!\bigl[w^2(\by-\thetapop)\bigr]}{(\EE[w])^2}
  \;+\; O(n^{-2})
\end{equation}
as $n\to\infty$.
\end{theorem}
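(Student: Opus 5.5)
The approach is the classical delta-method expansion for ratio estimators, with the remainder controlled carefully. First, write $\Tn = \bar A/\bar W$, where $\bar W=\frac1n\sum_i w_i$ and $\bar A=\frac1n\sum_i w_i\by_i$. Set $\mu=\EE[w]>0$ and note $\EE[\bar A]=\mu\,\thetapop$. Centering at the target gives the exact identity
\[
  \Tn-\thetapop=\frac{\bar B}{\bar W},\qquad \bar B:=\frac1n\sum_i w_i(\by_i-\thetapop),\quad \EE[\bar B]=\mathbf 0 .
\]
Write $\bar W=\mu(1+\delta)$ with $\delta=(\bar W-\mu)/\mu$ and expand $1/(1+\delta)$ to second order with an exact remainder:
\[
  \frac{1}{1+\delta}=1-\delta+\frac{\delta^2}{1+\delta}.
\]
This gives
\[
  \Tn-\thetapop=\frac{\bar B}{\mu}-\frac{\bar B\,\delta}{\mu}+\frac{\bar B\,\delta^2}{\mu(1+\delta)}.
\]

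Next, take expectations term by term. The first term has mean zero. For the second, i.i.d.\ independence kills cross terms, so
\[
  \EE[\bar B\delta]=\frac{1}{n\mu}\Cov\bigl(w(\by-\thetapop),w\bigr)=\frac{1}{n\mu}\EE\bigl[w^2(\by-\thetapop)\bigr],
\]
using $\EE[w(\by-\thetapop)]=\mathbf 0$. This yields exactly the claimed leading term $-\frac1n\EE[w^2(\by-\thetapop)]/\mu^2$. All moments here are finite because $w\le M$ and $\|\by\|\le C$, so $\|\thetapop\|\le C$ as well.

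The main obstacle is the remainder $\EE\bigl[\bar B\delta^2/(1+\delta)\bigr]$. The denominator $\bar W$ can be arbitrarily close to $0$, since we only know $w>0$ a.s., not $w\ge c>0$. To handle this, use the convex-hull bound instead of bounding $1/\bar W$. Since $\Tn$ is a convex combination of the $\by_i$, we have $\|\Tn-\thetapop\|\le 2C$ deterministically. Hence the remainder, which equals $(\Tn-\thetapop)\delta^2$ by the exact identity (indeed $\bar B/(\mu(1+\delta))=\Tn-\thetapop$), satisfies
\[
  \bigl\|\EE[(\Tn-\thetapop)\delta^2]\bigr\|\le 2C\,\EE[\delta^2]=O(1/n).
\]
That is only $O(1/n)$, so the expansion must go one order further. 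Use
\[
  \frac{1}{1+\delta}=1-\delta+\delta^2-\frac{\delta^3}{1+\delta},
\]
which gives
\[
  \Tn-\thetapop=\frac{\bar B}{\mu}(1-\delta+\delta^2)-(\Tn-\thetapop)\delta^3 .
\]
The last term is bounded by $2C\,\EE|\delta|^3\le 2C(\EE\delta^4)^{3/4}=O(n^{-3/2})$. That is not yet $O(n^{-2})$, so go to fourth order:
\[
  \frac{1}{1+\delta}=\sum_{k=0}^{3}(-\delta)^k+\frac{\delta^4}{1+\delta},
\]
with remainder $(\Tn-\thetapop)\delta^4$ bounded by $2C\,\EE[\delta^4]=O(n^{-2})$. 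Here the fourth moment of a mean of bounded i.i.d.\ centered variables is $O(n^{-2})$.

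Finally, check that the polynomial terms $\EE[\bar B\delta^2]$ and $\EE[\bar B\delta^3]$ are $O(n^{-2})$. For sums of bounded i.i.d.\ centered terms, $\EE[\bar B\delta^2]=n^{-2}\EE[b\,d^2]=O(n^{-2})$, where $b$ and $d$ are the single-sample centered summands. Similarly, $\EE[\bar B\delta^3]=O(n^{-2})$ by counting index pairings: only index patterns in which every index appears at least twice survive, and these contribute $O(n^2)/n^4$. Collecting the terms gives Eq.~\eqref{eq:ratio_bias}.
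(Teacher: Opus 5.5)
Your proof is correct, but it controls the remainder by a genuinely different route from the paper's.

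The paper Taylor-expands the two-variable map $g(\bv,w)=\bv/w$ around $(\boldsymbol\mu_v,\mu_w)$. Because $1/\bar w$ has no a priori bound, it then splits on the good event $G_n=\{\bar w\ge\mu_w/2\}$. Hoeffding's inequality makes $\Pr(G_n^c)$ super-polynomially small. On $G_n$, a fourth-order Lagrange remainder with uniformly bounded partial derivatives, together with the cubic and quartic moment counts, gives $O(n^{-2})$.

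You instead center the numerator at $\thetapop$, so the numerator becomes a mean-zero linear statistic $\bar B$ and only the scalar denominator needs expanding. The exact identity $1/(1+\delta)=\sum_{k=0}^{3}(-\delta)^k+\delta^4/(1+\delta)$ then turns the remainder into $(\Tn-\thetapop)\,\delta^4$. The deterministic convex-hull bound $\|\Tn-\thetapop\|\le 2C$ controls this with no lower bound on $\bar W$ at all. This eliminates the good event, the concentration inequality, and all derivative bounds, leaving a purely moment-based argument. In fact it uses $w\le M$ only to ensure $\EE[w^4]<\infty$, so it holds under weaker hypotheses than stated.

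The paper's route buys generality and reuse instead. It is the standard delta-method template for any smooth function of sample means, and the same good-event machinery is recycled in the proofs of \Cref{thm:abc} and \Cref{prop:variance}.
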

\end{mdframed}
The proof views $\Tn$ as a ratio of two sample averages, $\Tn = \sum_i w_i\by_i \big/ \sum_i w_i$. Each sum is individually unbiased, but the expectation of the ratio differs from the ratio of expectations, and the leading-order gap is controlled by the classical delta method for self-normalized estimators~\citep{hesterberg1995,owen2013,tin1965}.
See \Cref{app:delta} for the full derivation.

The term $\EE[w^2(\by-\thetapop)]$ is a \emph{$w^2$-weighted displacement}.
Since $w^2$ amplifies large weights more aggressively than $w$ does, this vector points from $\thetapop$ toward the more localized $w^2$-weighted center of mass.
The \textbf{minus sign} reverses it, so the minibatch estimator is systematically pulled \emph{away} from this local concentration toward a flatter average, a \textbf{centroid contraction}.
In clustered reference distributions such as the toy in \Cref{fig:illustration}, this is visible as a pull from the nearest mode toward the global mean.
The $1/n$ factor means the bias shrinks with larger minibatch size but remains nonzero at finite~$n$ in general cases.

\begin{remark}[Assumptions in practice]
\label{rem:assumptions}
The boundedness and moment conditions in \Cref{thm:bias_n1,thm:zero_bias,thm:ratio_bias}, which are inherited by the ABC results in \Cref{sec:abc}, hold naturally in the drifting-model setting for both the positive (data) and negative (current generator) sides.
The exponential kernel is strictly positive and bounded, $k(\bx,\by) = \exp(-\|\bx-\by\|_2/\tau) \in (0,1]$, so $\Pr(w>0)=1$.
The reference features $\by$ come from a frozen encoder applied to a finite dataset on the positive side and to a batch of generated images on the negative side. In both cases, $\|\by\|$ is bounded almost surely, giving finite moments of all orders.
\end{remark}

\textbf{From diagnosis to correction.}
Given that the bias is $O(1/n)$ with a \emph{closed-form} leading term, a natural plan is to subtract it.
Two questions arise immediately.
First, the leading term involves full-distribution expectations that are not available at training time, so can we plug in minibatch estimates without reintroducing an $O(1/n)$ error?
Second, bias reduction usually comes at the cost of inflated variance, and can we avoid paying that cost?
The next section answers both, along with a geometric sanity check on the corrected estimator.

\section{Analytical Bias Correction}
\label{sec:abc}

Having established that the minibatch centroid carries an $O(1/n)$ bias,
we now construct a closed-form correction that can be computed from the \emph{same minibatch} at negligible cost.
We derive the correction and prove that it reduces the bias to $O(1/n^2)$, introduces no first-order increase in total variance, and preserves convex-hull containment. We briefly note its relationship with temperature tuning.

\subsection{Construction}
\label{sec:abc_construction}

\Cref{thm:ratio_bias} gives the leading-order bias $-\frac{1}{n}\EE[w^2(\by-\thetapop)]/(\EE[w])^2$,
but this expression involves full-distribution expectations that are unavailable at training time.
To solve this problem, we use the key observation that all quantities can be estimated from the same minibatch used to compute $\Tn$.
Specifically, we replace $\EE[\cdot]$ with $\frac{1}{n}\sum_i$, $\thetapop$ with $\Tn$, and $\EE[w]$ with $\frac{1}{n}\sum_i w_i$.
Substituting these into the bias formula and simplifying yields the plug-in correction:
\begin{align}\label{eq:delta}
  \Delta_n
  &\;:=\;
    -\frac{1}{n}\,
    \frac{\tfrac{1}{n}\sum_i w_i^2\,( \by_i - \Tn)}
         {\bigl(\tfrac{1}{n}\sum_i w_i\bigr)^{\!2}}
  \;=\;
    \frac{\sum_i w_i^2\,(\Tn - \by_i)}
         {{\bigl(\sum_i w_i\bigr)}^{2}}
  \;=\;
    \sum_{i=1}^n \alpha_i^2\,(\Tn - \by_i),
\end{align}
where $\alpha_i = w_i / \sum_j w_j$ are the softmax weights as defined in \Cref{eq:centroid}.
This yields our Analytical Bias Correction (ABC) estimator, which subtracts this correction from the original centroid:
\begin{equation}\label{eq:abc}
  \boxed{\;
  \Tnabc
  \;:=\; \Tn - \Delta_n
  \;=\; \Bigl(1-\textstyle\sum_i\alpha_i^2\Bigr)\,\Tn
        \;+\; \sum_{i=1}^n \alpha_i^2\,\by_i.
  \;}
\end{equation}
As a sanity check, when weights are uniform ($\alpha_i = 1/n$), $\sum_i\alpha_i^2 = 1/n$ and $\Delta_n = \mathbf{0}$.
This is consistent with \Cref{thm:zero_bias}(i), which shows that constant weights yield an unbiased estimator.

\subsection{Formal Guarantees}
\label{sec:abc_guarantees}

Since the correction $\Delta_n$ is itself estimated from the same biased minibatch,
it is natural to ask whether it truly cancels the bias rather than introducing new errors.
The following theorem confirms that ABC reduces the bias by one order of~$n$.

\begin{mdframed}[style=mainthm]
\begin{theorem}[Bias reduction]\label{thm:abc}
Under the assumptions of \Cref{thm:ratio_bias},
the ABC estimator reduces the bias from $O(n^{-1})$ to $O(n^{-2})$:
\begin{equation}\label{eq:abc_bias}
  \EE\bigl[\Tnabc - \thetapop\bigr] = O(n^{-2}).
\end{equation}
\end{theorem}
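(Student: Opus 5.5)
Since $\EE[\Tn]-\thetapop = b/n + O(n^{-2})$ by \Cref{thm:ratio_bias}, with $b := -\EE[w^2(\by-\thetapop)]/(\EE[w])^2$, the claim reduces to
\[
\EE[\Delta_n] = \frac{b}{n} + O(n^{-2}).
\]
Subtracting this from the expansion of $\EE[\Tn]$ then gives \Cref{eq:abc_bias}. The plan is to write $\Delta_n$ as a smooth function of sample means and expand it around the population values, exactly as in the delta-method proof of \Cref{thm:ratio_bias} in \Cref{app:delta}.

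\textbf{Rewriting $\Delta_n$.} Introduce the sample means
\[
\bar w = \tfrac1n\sum_i w_i,\qquad \bar u = \tfrac1n \sum_i w_i \by_i,\qquad \bar s = \tfrac1n\sum_i w_i^2,\qquad \bar r = \tfrac1n\sum_i w_i^2\by_i,
\]
with population counterparts $\mu_w = \EE[w]$, $\mu_u = \EE[w\by]$, $\mu_s = \EE[w^2]$, $\mu_r = \EE[w^2\by]$. All of these are bounded because $w\in[0,M]$ and $\|\by\|\le C$. From \Cref{eq:delta},
\[
\Delta_n = \frac{1}{n}\, g(\bar w,\bar u,\bar s,\bar r), \qquad g(a,\bv,s,\bv') := \frac{s\,\bv/a - \bv'}{a^2}.
\]
The function $g$ is smooth near $(\mu_w,\mu_u,\mu_s,\mu_r)$ because $\mu_w>0$. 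At that point,
\[
g(\mu_w,\mu_u,\mu_s,\mu_r) = \frac{\mu_s\thetapop - \mu_r}{\mu_w^2} = b .
\]
Because of the explicit prefactor $1/n$, it suffices to show $\EE[g(\bar w,\bar u,\bar s,\bar r)] = b + O(n^{-1})$, i.e.\ that $g$ of the sample means has only an $O(1/n)$ bias, not $O(1)$.

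\textbf{First-order expansion.} Expand $g$ to first order with a second-order remainder. The linear term has zero mean, since each sample mean is unbiased. The remainder is quadratic in the deviations, so on a good event its expectation is $O(\EE\|\text{deviation}\|^2) = O(1/n)$ by the i.i.d.\ variance bound. This yields $\EE[\Delta_n] = b/n + O(n^{-2})$.

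\textbf{Main obstacle: the bad event.} The delicate step is controlling the region where $\bar w$ is near $0$, where $g$ is not smooth. I would split on $E = \{\bar w \ge \mu_w/2\}$.
\begin{itemize}
\item On $E$, the Taylor remainder is uniformly bounded by a constant times the squared deviation.
\item On $E^c$, Hoeffding's inequality gives $\Pr(E^c)\le e^{-cn}$, since $w\in[0,M]$. Moreover, $\Delta_n$ is bounded deterministically: $\|\Delta_n\| \le \sum_i\alpha_i^2\|\Tn-\by_i\|\le 2C$, because $\Tn$ lies in the convex hull of the $\by_i$ and $\sum_i\alpha_i^2\le 1$.
\end{itemize}
Hence the contribution from $E^c$ is $O(e^{-cn})$, which is negligible. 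The same truncation argument presumably underlies the $O(n^{-2})$ remainder in \Cref{thm:ratio_bias}, so that remainder can be taken as given.

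\textbf{Conclusion.} Combining the two pieces,
\[
\EE[\Tnabc]-\thetapop = \Bigl(\frac{b}{n}+O(n^{-2})\Bigr)-\Bigl(\frac{b}{n}+O(n^{-2})\Bigr) = O(n^{-2}).
\]
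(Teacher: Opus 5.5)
Your proof is correct, but it takes a different route from the paper's. The paper inserts $\thetapop$ to split $\Delta_n$ into two terms. Term (A) is $\sum_i\alpha_i^2(\thetapop-\by_i)$, the plug-in estimate of the leading bias using the true centroid. Term (B) is $(\sum_i\alpha_i^2)(\Tn-\thetapop)$, the error from substituting $\Tn$ for $\thetapop$. For (A), the paper Taylor-expands $1/\bar w^2$ around $\mu_w$ and bounds the cross term $\EE[\bar U_n\epsilon]$ by independence. For (B), it writes the expectation as $\EE[A_n]\,\EE[B_n]+\Cov(A_n,B_n)$. This needs $\EE[A_n]=O(n^{-1})$, a separate delta-method bound $\Var(A_n)=O(n^{-3})$, and Cauchy--Schwarz against $\Var(\Tn)=O(n^{-1})$.

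You instead observe that $n\Delta_n=g(\bar w,\bar u,\bar s,\bar r)$ is a single smooth function of four i.i.d. sample means. Its value at the population means is exactly the leading coefficient $b$. Because the $1/n$ prefactor is explicit, a first-order delta method suffices: a smooth function of sample means has $O(1/n)$ bias. The paper's cross term and its whole analysis of (B) are absorbed into the quadratic remainder bound $O(\EE\|\xi\|^2)$. The bad event is handled by the global bound $\|\Delta_n\|\le 2C$.

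Your route is shorter and more uniform. It makes the good-event truncation explicit instead of relying on $O_p$ manipulations. It also extends immediately to any correction of the form $n^{-1}h(\text{sample means})$ with $h$ smooth and $h$ at the population means equal to the leading bias coefficient. The paper's decomposition, in exchange, shows transparently why plugging $\Tn$ in for $\thetapop$ inside the correction costs only $O(n^{-2})$: the (B) error is essentially a product of two $O(1/n)$ quantities plus a small covariance.

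When you write it out, make one step explicit: the linear Taylor term and $b$ are bounded, so restricting to $E$ changes $\EE[(b+L)\mathbf{1}_E]$ from $b$ by only $O(\Pr(E^c))$. That is why the zero-mean argument survives the truncation.
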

\end{mdframed}
\begin{proof}[Proof sketch]
Decompose $\Delta_n$ by inserting $\thetapop$:
\[
  \Delta_n
  \;=\; \underbrace{\sum_i \alpha_i^2\,(\thetapop-\by_i)}_{\text{(A)}}
  \;+\; \underbrace{\Bigl(\sum_i\alpha_i^2\Bigr)
        (\Tn-\thetapop)}_{\text{(B)}}.
\]
Term~(A) is the sample analog of the leading bias in \Cref{thm:ratio_bias} and matches it up to $O(n^{-2})$ after a Taylor expansion of the denominator around $\EE[w]$.
Term~(B) has expectation $O(n^{-2})$ because $\EE[\sum_i\alpha_i^2] = O(n^{-1})$ from bounded weights and $\EE[\Tn-\thetapop] = O(n^{-1})$ from \Cref{thm:ratio_bias}, with the residual covariance also $O(n^{-2})$ by Cauchy--Schwarz.
Combining, $\EE[\Delta_n] = (\EE[\Tn]-\thetapop) + O(n^{-2})$, hence $\EE[\Tnabc-\thetapop] = O(n^{-2})$.
See \Cref{app:abc_bias} for the full proof.
\end{proof}

Beyond bias reduction, two practical concerns remain.
First, since the correction subtracts $\Delta_n$ from $\Tn$, the corrected centroid could in principle extrapolate beyond the minibatch points.
Second, reducing bias often increases variance. 
We address both below.

\begin{proposition}[Convex hull guarantee]\label{prop:convex}
Under the conditions of \Cref{thm:abc},
$\Tnabc = \sum_i \gamma_i\,\by_i$ with
$\gamma_i = \alpha_i(1-\sum_j\alpha_j^2)+\alpha_i^2\geq 0$ and
$\sum_i\gamma_i=1$.
Hence $\Tnabc$ lies in the convex hull of
$\{\by_1,\dots,\by_n\}$.
\end{proposition}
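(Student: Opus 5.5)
The argument is purely algebraic and pointwise in the minibatch; no expectation or asymptotics are needed. I will start from the boxed form of the ABC estimator in \Cref{eq:abc} and substitute the definition $\Tn=\sum_j\alpha_j\by_j$ from \Cref{eq:centroid}. This gives
\[
  \Tnabc=\Bigl(1-\textstyle\sum_j\alpha_j^2\Bigr)\sum_i\alpha_i\by_i+\sum_i\alpha_i^2\by_i=\sum_i\Bigl[\alpha_i\bigl(1-\textstyle\sum_j\alpha_j^2\bigr)+\alpha_i^2\Bigr]\by_i ,
\]
which identifies the coefficients $\gamma_i$ stated in \Cref{prop:convex}. Well-definedness comes from the hypotheses inherited from \Cref{thm:ratio_bias}: $\Pr(w>0)=1$, so $\sum_j w_j>0$ almost surely. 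On that event the $\alpha_i$ are well defined, nonnegative (in fact positive), and satisfy $\sum_i\alpha_i=1$.

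\emph{Nonnegativity.} Since $0\le\alpha_j\le 1$, we have $\alpha_j^2\le\alpha_j$, and therefore $\sum_j\alpha_j^2\le\sum_j\alpha_j=1$. It follows that $1-\sum_j\alpha_j^2\ge 0$. Then $\gamma_i$ is a sum of two nonnegative terms, $\alpha_i(1-\sum_j\alpha_j^2)\ge 0$ and $\alpha_i^2\ge 0$, so $\gamma_i\ge 0$.

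\emph{Normalization.} Summing over $i$ gives
\[
  \sum_i\gamma_i=\Bigl(\sum_i\alpha_i\Bigr)\Bigl(1-\sum_j\alpha_j^2\Bigr)+\sum_i\alpha_i^2=1-\sum_j\alpha_j^2+\sum_i\alpha_i^2=1.
\]
Thus $(\gamma_1,\dots,\gamma_n)$ is a probability vector, and $\Tnabc$ is a convex combination of $\by_1,\dots,\by_n$. By definition it therefore lies in their convex hull. Nothing here uses the boundedness constants $M$ and $C$; they are simply inherited.

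I do not expect a real obstacle. The only step that needs care is the inequality $\sum_j\alpha_j^2\le 1$, which keeps the coefficient $(1-\sum_j\alpha_j^2)$ from turning negative. It relies on the weights lying in $[0,1]$ and summing to one, which holds because the kernel is nonnegative and the normalizer is positive almost surely. If one also wants strict positivity of the $\gamma_i$, note that $\alpha_i>0$ almost surely, so $\gamma_i\ge\alpha_i^2>0$.
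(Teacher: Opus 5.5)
Your proof is correct and matches the paper's argument: you bound $\sum_j\alpha_j^2\le\sum_j\alpha_j=1$ to get $\gamma_i\ge 0$, then sum the coefficients to obtain $1$. The only addition is that you expand the boxed ABC formula explicitly to identify the $\gamma_i$ and note almost-sure well-definedness, both of which the paper leaves implicit.
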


\begin{proof}
$\sum_j\alpha_j^2\leq\sum_j\alpha_j=1$, so $1-\sum_j\alpha_j^2\geq 0$ and $\gamma_i\geq 0$.
Summing: $\sum_i\gamma_i=(1-\sum_j\alpha_j^2)+\sum_j\alpha_j^2=1$.
\end{proof}

\noindent
That is, $\Tnabc$ is always a weighted average of the minibatch points with non-negative weights,
and it never extrapolates beyond the data.

Intuitively, $\Delta_n$ is an $O(1/n)$ correction while the fluctuations of $\Tn$ are of order $O(1/\sqrt{n})$, so subtracting a term that is smaller than the existing noise by a factor of $\sqrt{n}$ should not change the variance at leading order. The next proposition makes this precise.

\begin{proposition}[No first-order total-variance increase]\label{prop:variance}
Under the conditions of \Cref{thm:abc}:
\begin{equation}\label{eq:var_abc}
  \EE\bigl[\|\Tnabc-\EE[\Tnabc]\|^2\bigr]
  \;=\; \EE\bigl[\|\Tn-\EE[\Tn]\|^2\bigr] + O(n^{-3/2})
  \;=\; O(n^{-1}).
\end{equation}
\end{proposition}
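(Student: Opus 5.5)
The plan is to write $\Tnabc = \Tn - \Delta_n$ and expand the centered second moment:
\[
  \EE\bigl[\|\Tnabc-\EE\Tnabc\|^2\bigr]
  = \EE\bigl[\|\Tn-\EE\Tn\|^2\bigr]
  - 2\,\EE\bigl[\langle \Tn-\EE\Tn,\;\Delta_n-\EE\Delta_n\rangle\bigr]
  + \EE\bigl[\|\Delta_n-\EE\Delta_n\|^2\bigr].
\]
By Cauchy--Schwarz, the cross term is at most $2\sqrt{\EE\|\Tn-\EE\Tn\|^2}\,\sqrt{\EE\|\Delta_n-\EE\Delta_n\|^2}$. It therefore suffices to establish two moment bounds: $\EE\|\Tn-\EE\Tn\|^2=O(n^{-1})$ and $\EE\|\Delta_n\|^2=O(n^{-2})$. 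The second one dominates $\EE\|\Delta_n-\EE\Delta_n\|^2$. With both in hand, the cross term is $O(n^{-1/2}\cdot n^{-1})=O(n^{-3/2})$ and the last term is $O(n^{-2})$, which gives both equalities in \eqref{eq:var_abc}.

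For the first bound, write $\Tn-\thetapop = \bar U_n/\bar W_n$, where $\bar W_n=\frac1n\sum_i w_i$ and $\bar U_n=\frac1n\sum_i w_i(\by_i-\thetapop)$. The summands of $\bar U_n$ are i.i.d., bounded by $2MC$, and have mean zero, since $\EE[w(\by-\thetapop)]=0$ by the definition of $\thetapop$. Hence $\EE\|\bar U_n\|^2=O(n^{-1})$. To control the denominator, I split on the event $G=\{\bar W_n\ge \EE[w]/2\}$. On $G$ we have $\|\Tn-\thetapop\|\le 2\|\bar U_n\|/\EE[w]$. On $G^c$ we always have $\|\Tn-\thetapop\|\le 2C$, because $\Tn$ is a convex combination of points of norm at most $C$ and $\|\thetapop\|\le C$. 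Hoeffding's inequality with $w\in[0,M]$ gives $\Pr(G^c)\le e^{-cn}$. Together these give $\EE\|\Tn-\thetapop\|^2=O(n^{-1})$, and the centered version is no larger.

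For the second bound, \eqref{eq:delta} gives $\|\Delta_n\|\le \sum_i\alpha_i^2\,\|\Tn-\by_i\|\le 2C\sum_i\alpha_i^2$. On $G$, $\alpha_i\le 2M/(n\EE[w])$, so $\sum_i\alpha_i^2\le 4M^2/(n(\EE w)^2)$. On $G^c$ we only have $\sum_i\alpha_i^2\le 1$, but this event has exponentially small probability. Hence $\EE\|\Delta_n\|^2=O(n^{-2})$.

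The main obstacle is the denominator $\bar W_n$: it can be small, and no moment bound on $1/\bar W_n$ is assumed, since $w$ is only positive a.s. The good/bad event split handles this, because every quantity involved is deterministically bounded by the convex-hull structure (cf.\ \Cref{prop:convex}) and the bad event is exponentially rare. No Taylor expansion is needed, unlike in \Cref{thm:abc}; crude moment bounds suffice for the $O(n^{-3/2})$ rate.
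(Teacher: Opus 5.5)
Your proof is correct and follows the paper's argument: you use the same variance expansion of $\Tnabc=\Tn-\Delta_n$ with Cauchy--Schwarz on the cross term, the convex-hull bound $\|\Delta_n\|\le 2C\sum_i\alpha_i^2$ with a Hoeffding good-event split to get $\EE\|\Delta_n\|^2=O(n^{-2})$, and the reduction $\Var(\Tn)\le\EE\|\Tn-\thetapop\|^2$. The only difference is in bounding $\EE\|\Tn-\thetapop\|^2=O(n^{-1})$: you use the exact identity $\Tn-\thetapop=\bar U_n/\bar W_n$ with a mean-zero numerator, whereas the paper uses a first-order Taylor expansion with a quadratic remainder, so your route reaches the same bound without any remainder control.
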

\begin{proof}[Proof sketch]
The proof decomposes $\Tnabc = \Tn - \Delta_n$ and tracks each piece's contribution to the variance.
First, $\Tn$ is a smooth ratio of two i.i.d.\ sample means, so the delta method gives variance $O(n^{-1})$.
Second, since $\Tn$ is a convex combination of the bounded points, $\|\by_i-\Tn\|\leq 2C$, giving $\|\Delta_n\|^2\leq 4C^2(\sum_i\alpha_i^2)^2$. The same bounded-weight argument as in \Cref{thm:abc}, $\sum_i\alpha_i^2\leq M/\sum_j w_j$, yields $\EE[(\sum_i\alpha_i^2)^2]=O(n^{-2})$ and hence $\EE\|\Delta_n\|^2=O(n^{-2})$.
The cross term in the variance of $\Tn - \Delta_n$ is then $O(n^{-3/2})$ by Cauchy--Schwarz, giving the claimed residual.
See \Cref{app:variance} for the full proof.
\end{proof}
That is, ABC preserves the same leading-order $O(1/n)$ variance as the uncorrected estimator.

\subsection{Implementation}
\label{sec:abc_impl}

ABC is a \emph{drop-in replacement} for the minibatch centroid in \Cref{eq:centroid} that leaves the rest of the drifting-model training pipeline untouched, requiring no auxiliary network and no extra minibatch.
Since the drifting field contains both a positive and a negative centroid, each with $O(1/n)$ bias, ABC is applied independently to both.
Concretely, given the softmax weights $\texttt{alpha}\in\RR^{B\times n}$ and features $\texttt{Y}\in\RR^{n\times D}$, where $B$ is the number of queries processed in parallel and $D$ is the feature dimension, the corrected centroid requires only two additional lines in PyTorch:
\begin{lstlisting}
a2 = alpha ** 2   
T_abc = (1 - a2.sum(-1, keepdim=True)) * (alpha @ Y) + a2 @ Y 
\end{lstlisting}

\paragraph{Complexity analysis.}
We report costs per query.
The standard centroid $\Tn = \texttt{alpha @ Y}$ costs $O(nD)$.
ABC adds an element-wise square of the weights ($O(n)$),
a reduction for $\sum_i\alpha_i^2$ ($O(n)$),
and one additional matrix multiply $\texttt{a2 @ Y}$ ($O(nD)$).
Therefore, the total overhead is $O(nD)$, which is the same order as the baseline.

\paragraph{On the kernel temperature.}
Since both subsampling bias and a high temperature push the centroid toward the global mean, a natural question is whether tuning the temperature $\tau$ alone could substitute for ABC.
A first-order analysis in \Cref{app:temperature} shows it cannot, as
the bias depends on the \emph{local} weight distribution at each query, whereas $\tau$ is a single global scalar, and lowering $\tau$ also reduces the effective sample size.
A side effect is that the $\tau$ inherited from the standard estimator implicitly absorbs part of its $O(1/n)$ bias, so once ABC removes that bias, the optimal $\tau$ for the corrected estimator may shift.


\section{Experiments}
\label{sec:experiments}

\subsection{Toy Experiments}
\label{sec:bias_validation}
\ificlr
\begin{wrapfigure}[15]{r}{0.55\textwidth}
\else
\begin{wrapfigure}[15]{r}{0.58\textwidth}
\fi
  \centering
  \vspace{-1\baselineskip}
  \begin{subfigure}[b]{0.48\linewidth}
      \includegraphics[width=\textwidth]{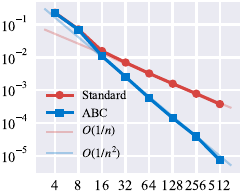}
      \caption{$\tau = 0.1$}
  \end{subfigure}
  \begin{subfigure}[b]{0.48\linewidth}
      \includegraphics[width=\textwidth]{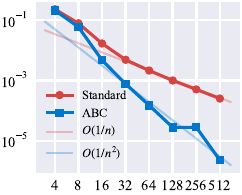}
      \caption{$\tau = 0.2$}
  \end{subfigure}
  \caption{Bias norm $\|\EE[\Tn]-\thetapop\|$ vs.\ $n$ on a 2D four-mode Gaussian toy, log-log axes.
  Left: tight kernel ($\tau{=}0.1$); right: wider kernel ($\tau{=}0.2$).
  Standard traces slope $-1$ ($O(1/n)$), ABC slope $-2$ ($O(1/n^2)$).}
  \label{fig:bias_loglog}
\end{wrapfigure}

We first verify the bias scaling predicted by \cref{thm:ratio_bias} on a controlled toy problem before turning to real generative modeling.
We construct a four-mode isotropic Gaussian in $\RR^{2}$ and draw a reference pool of $N = 10^{6}$ points.
The target centroid $\thetapop$ is computed as the softmax-weighted centroid using the full pool,
which at $N{=}10^6$ closely approximates the true expectation.
For each sample size $n$,
we subsample $n$ points from the pool, compute $\Tn$, and repeat $200{,}000$ times to estimate the bias norm $\|\EE[\Tn] - \thetapop\|$ at each query, then average across $200$ queries to summarize the pointwise scaling.
Setup details are in \Cref{app:toy}.

In \Cref{fig:bias_loglog}, the Standard centroid traces a slope-$-1$ line, and the ABC curve traces slope $-2$ over roughly two decades of $n$.
At small $n$, e.g., $n{=}4, 8$, the two curves nearly overlap, because higher-order residuals in the ABC expansion are comparable in magnitude to the $O(1/n)$ bias of the Standard estimator when $n$ is small.
Even within this range, the ABC bias never exceeds the Standard bias at any $n$ we tested.
For larger $n$, the observed slopes closely match $-1$ and $-2$, empirically validating \Cref{thm:ratio_bias,thm:abc}.

We also compare ABC with three classical bias-reduction baselines, namely the jackknife, a bootstrap correction, and the BR-SNIS estimator of \citet{cardoso2022}, on a lighter CPU variant of this toy setup.
Jackknife, Bootstrap, and ABC all operate on a fixed $n$-sample pool. Among them, ABC is the fastest correction while attaining comparable bias.
BR-SNIS attains the smallest bias but requires roughly $Kn$ fresh reference samples per estimate (with $K=10$), a substantially larger sample budget than the others.
Together, these results support ABC as the most efficient correction under the fixed-$n$ sample budget typical of drifting-model training.
The details are in \Cref{app:baselines}.

\subsection{Image Generation Setup}
\label{sec:setup}

We evaluate ABC on the DriftDiT-Small architecture \citep{deng2026drifting},
a 26.6M-parameter diffusion transformer that uses a frozen DINOv2~\citep{oquab2024dinov2} multi-scale encoder and bidirectional kernel normalization.
Main experiments use CIFAR-10~\citep{krizhevsky2009learning}.
We sweep the per-class sample count $n \in \{8, 16, 32, 64\}$ used for both positive and negative centroids, and train two variants that differ only in how the centroid is computed,
the standard (uncorrected) estimator and the ABC-corrected one from \cref{sec:abc}.

\paragraph{Training.}
We follow the settings of \citet{deng2026drifting}, including the temperature schedule $\tau\in\{0.02, 0.05, 0.2\}$.
The comparison is therefore mildly favorable to Standard, since the temperature schedule is relatively optimal for Standard.
To equalize total positive samples seen across configurations, we scale training duration inversely with $n$ so that $n{=}64$ trains for $500$ epochs, $n{=}32$ for $1\,000$, etc.
Full hyperparameters and computation details are in \Cref{app:hyperparams}.

\paragraph{Evaluation.}
We report Fr\'{e}chet Inception Distance (FID)~\citep{heusel2017gans} computed on 50\,k generated samples, using the same sampling configuration for Standard and ABC.
All reported quantities are mean~$\pm$ max-deviation across three random seeds unless noted otherwise.

\begin{figure*}[t]
  \centering
  \begin{subfigure}[b]{0.24\textwidth}
      \includegraphics[width=\textwidth]{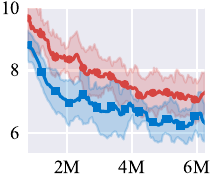}
      \caption{$n{=}8$}
  \end{subfigure}\hfill
  \begin{subfigure}[b]{0.24\textwidth}
      \includegraphics[width=\textwidth]{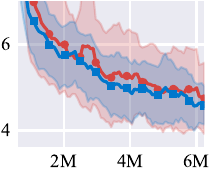}
      \caption{$n{=}16$}
  \end{subfigure}\hfill
  \begin{subfigure}[b]{0.24\textwidth}
      \includegraphics[width=\textwidth]{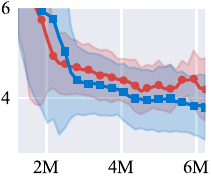}
      \caption{$n{=}32$}
  \end{subfigure}\hfill
  \begin{subfigure}[b]{0.24\textwidth}
      \includegraphics[width=\textwidth]{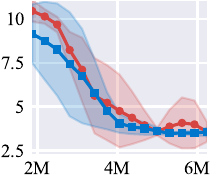}
      \caption{$n{=}64$}
  \end{subfigure}
  \caption{FID vs.\ total positive samples seen (smoothed with a rolling average).
  Each panel fixes a per-class sample count $n$ and compares the Standard variant of the drifting model (\textcolor{colTn}{\textbf{red circles}}) with its ABC-corrected counterpart (\textcolor{colTnabc}{\textbf{blue squares}}).
  Across all four values of~$n$, the ABC curve lies below the baseline through most of training.}
  \label{fig:convergence}
\end{figure*}

\subsection{Main Results}
\label{sec:main_results}

\Cref{fig:convergence} shows the FID trajectories throughout training for all four sample sizes,
and \Cref{tab:main_results} summarizes both the last-5-evaluation average and the best snapshot FID.
ABC improves the last-5-average FID at every tested $n$, and the best FID in most cases.
The largest gains occur at the smallest $n$, consistent with the $O(1/n)$ bias scaling of \Cref{thm:ratio_bias}.
This advantage is also visible in the smoothed trajectories of \Cref{fig:convergence}.
These observations match the picture from \Cref{sec:bias}. They are consistent with the $O(1/n)$ pointwise bias having a measurable impact on generation quality, which ABC mitigates in the small-$n$ regime.
Single-seed results on CIFAR-100 presented in \Cref{app:cifar100} also show that ABC improves both last-5 average and best FID, supporting consistency with the CIFAR-10 trends.

\begin{table}[t]
\centering
\caption{FID on CIFAR-10 for Standard vs ABC across per-class sample count $n$.}
\label{tab:main_results}
\resizebox{\linewidth}{!}{%
\begin{tabular}{@{}ccccccc@{}}
\toprule
\multirow{2}{*}{$n$} & \multirow{2}{*}{\shortstack{Total positives\\per class}} & \multirow{2}{*}{Epochs} & \multicolumn{2}{c}{Last-5 average FID ($\downarrow$)} & \multicolumn{2}{c}{Best FID ($\downarrow$)} \\
\cmidrule(lr){4-5} \cmidrule(lr){6-7}
 &  &  & Standard & ABC & Standard & ABC \\
\midrule
  8 & $6.24$\,M & 4\,000 & $7.31 \pm 0.94$ & $\mathbf{6.40 \pm 0.36}$\,\dnred{0.91} & $6.83 \pm 0.95$ & $\mathbf{5.81 \pm 0.34}$\,\dnred{1.02} \\
 16 & $6.24$\,M & 2\,000 & $4.79 \pm 1.04$ & $\mathbf{4.66 \pm 0.66}$\,\dnred{0.13} & $4.46 \pm 1.00$ & $\mathbf{4.30 \pm 0.50}$\,\dnred{0.17} \\
 32 & $6.24$\,M & 1\,000 & $4.34 \pm 0.60$ & $\mathbf{3.89 \pm 0.73}$\,\dnred{0.45} & $4.02 \pm 0.50$ & $\mathbf{3.71 \pm 0.69}$\,\dnred{0.31} \\
 64 & $6.24$\,M &   500 & $3.77 \pm 0.89$ & $\mathbf{3.52 \pm 0.14}$\,\dnred{0.24} & $\mathbf{3.24 \pm 0.03}$ & $3.40 \pm 0.08$\,\upred{0.16} \\
\bottomrule
\end{tabular}%
\ificlr
\vspace{-3em}
\fi
}
\end{table}

\subsection{Training Speed}
\label{sec:training_speed}

Beyond the endpoint FID, we ask how quickly each estimator reaches a target.
We record the first epoch at which FID drops below the threshold and convert to positive samples seen.
As the $6.24$\,M-per-class budget is fixed across $n$, samples-to-FID is comparable across batch sizes.
\Cref{tab:samples_to_fid} reports an attainment rate ``$k/3$'' and a sample-count summary over the seeds that reach the threshold.
Despite the large variance inherent to first-hit-time measurements on noisy FID curves, ABC reaches the target with comparable or fewer samples at every $n$, with the strongest effect at small~$n$.
For example, at $n{=}8$ with FID~$\leq 10$, ABC needs about $40\%$ of Standard's samples.
With the tighter FID~$\leq 7$ threshold, only ABC reliably attains it across seeds ($3/3$ vs.\ $1/3$).
The largest gains occur at the smallest $n$, consistent with the $O(1/n)$ bias shrinking with~$n$.

\begin{table}[t]
\centering
\caption{Training samples (millions of positives) required to first reach FID~$\leq 10$ and FID~$\leq 7$.
$(k/3)$ flags attainment rate when fewer than three seeds reach it.
}
\label{tab:samples_to_fid}
\begin{tabular}{@{}ccccc@{}}
\toprule
\multirow{2}{*}{$n$} & \multicolumn{2}{c}{Required Samples to FID $\leq 10$ ($\downarrow$)} & \multicolumn{2}{c}{Required Samples to FID $\leq 7$ ($\downarrow$)} \\
\cmidrule(lr){2-3}\cmidrule(lr){4-5}
 & Standard & ABC & Standard & ABC \\
\midrule
$8$  & $8.19\pm4.29$ & $\mathbf{3.38\pm0.52}$\,\dnred{4.81} & $26.52$ (1/3) & $\mathbf{21.84\pm12.48}$ (3/3)\,\dnred{4.68} \\
$16$ & $4.94\pm1.04$ & $\mathbf{4.42\pm0.52}$\,\dnred{0.52} & $11.44\pm7.28$ & $\mathbf{7.80\pm4.68}$\,\dnred{3.64} \\
$32$ & $8.84\pm3.64$ & $8.84\pm3.64$ & $13.52\pm5.20$ & $\mathbf{10.92\pm3.12}$\,\dnred{2.60} \\
$64$ & $16.64\pm10.40$ & $\mathbf{13.52\pm8.32}$\,\dnred{3.12} & $30.16\pm4.16$ & $\mathbf{27.04\pm11.44}$\,\dnred{3.12} \\
\bottomrule
\ificlr
\vspace{-2em}
\fi
\end{tabular}
\end{table}

\subsection{Ablation Study on Positive vs.\ Negative Correction}
\label{sec:ablation}

\ificlr
\begin{wrapfigure}[15]{r}{0.42\textwidth}
    \centering
    \vspace{-\baselineskip}
\else
\begin{wrapfigure}[17]{r}{0.48\textwidth}
    \centering
\fi
    \includegraphics[width=\linewidth]{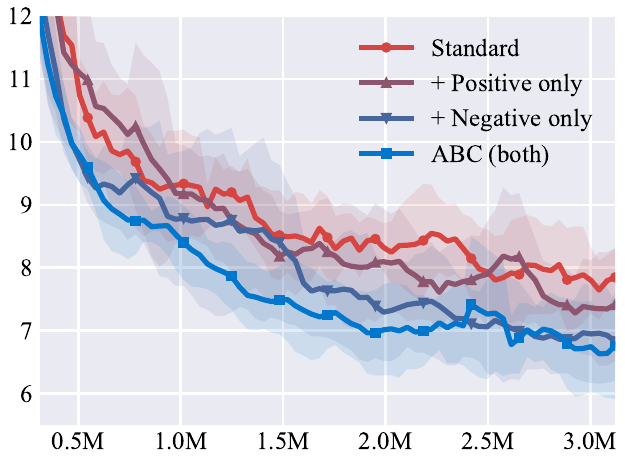}
    \caption{FID trajectories at $n{=}8$ with four variants. Bands show mean $\pm$ max-deviation across three seeds.}
    \label{fig:ablation_trajectory}
    \end{wrapfigure}
ABC corrects both the positive centroid over real data and the negative centroid over generated samples.
To isolate each contribution, we run an ablation at $n{=}8$ with four variants and report FID trajectories smoothed across seeds in \Cref{fig:ablation_trajectory}.
First, every correction variant lies below the Standard curve, confirming that both sides of the centroid contribute bias that training can exploit.
Second, the negative-only variant outperforms positive-only, consistent with self-masking reducing the negative side's effective sample count to $n{-}1$ and thereby increasing its $O(1/n)$ finite-sample bias relative to the positive side.
Third, the fully combined ABC attains the lowest FID across most of training and is never clearly worse than any single-side variant.
Taken together, these observations support adopting the combined correction as our default ABC deployment.

\subsection{Computational Overhead}
\label{sec:overhead}

\begin{wraptable}[9]{r}{0.52\linewidth}
\vspace{-1\baselineskip}
\centering
\caption{Computational overhead comparison.}
\label{tab:overhead}
\small
\begin{tabular}{@{}lcc@{}}
\toprule
 & Standard & ABC \\
\midrule
Peak GPU memory ($\downarrow$) & 4\,729\,MB & 4\,731\,MB\,\upred{2MB} \\
Epoch, no fusion ($\downarrow$) & 37.1\,s & 42.8\,s\,\upred{5.7s} \\
Epoch, fused ($\downarrow$) & 38.0\,s & 37.0\,s\,\dnred{1.0s} \\
Parameters & 26.6\,M & 26.6\,M \\
\bottomrule
\end{tabular}
\end{wraptable}

\Cref{tab:overhead} compares the computational cost of ABC against the standard method at $n{=}8$.
Epoch time is averaged over 10 epochs after startup.
ABC introduces no additional parameters and negligible GPU memory overhead,
since the correction operates entirely on the existing softmax weights via scalar arithmetic.
Without kernel fusion, the per-epoch wall time increases from 37.1\,s to 42.8\,s due to the extra kernel launches for the squared-weight computation and the corrected centroid combination.
However, with compiled execution via \texttt{torch.compile} the added operations are fused by the compiler into the surrounding kernel pipeline, and the wall-time difference from the standard method falls within the measurement noise of our 10-epoch average.
\section{Conclusion}
\label{sec:conclusion}

We identified a structural $O(1/n)$ subsampling bias in drifting models,
arising from the softmax self-normalization of the minibatch estimator.
We further proposed an Analytical Bias Correction (ABC) that reduces the leading-order bias from $O(1/n)$ to $O(1/n^{2})$ while preserving convex-hull containment,
and that can be implemented in two lines of code with negligible wall-time overhead under compiled execution.
Experiments on CIFAR-10 confirmed that ABC delivers lower FID and faster training,
with the largest gains at small~$n$ where the $O(1/n)$ bias is most significant.

\section*{Acknowledgments}
This work used Anvil \citep{song2022anvil} at Purdue University through allocation CIS251316 from the Advanced Cyberinfrastructure Coordination Ecosystem: Services \& Support (ACCESS) program \citep{boerner2023access}, which is supported by National Science Foundation grants \#2138259, \#2138286, \#2138307, \#2137603, and \#2138296.

\bibliographystyle{iclr2026_conference}
\bibliography{references}

\newpage
\appendix

\section{Notation Summary}
\label{app:notation}
For ease of reference, we present a table of notations below.
\begin{table}[!ht]
  \caption{Table of notation.}
  \label{tab:notations}
\centering
\small
\begin{tabular}{@{}ll@{}}
\toprule
\textbf{Notation} & \textbf{Description} \\
\midrule
$\bx$ & Query point \\
$\by_i$ & Reference point ($i=1,\dots,n$) \\
$n$ & Minibatch size \\
$w_i = k(\bx,\by_i)$ & Kernel weight \\
$\alpha_i = w_i/\textstyle\sum_j w_j$ & Normalized softmax weight \\
$\Tn = \sum_i \alpha_i\by_i$ & Minibatch centroid (biased) \\
$\thetapop = \EE[w\by]/\EE[w]$ & Target centroid \\
$\Delta_n = \sum_i \alpha_i^2(\Tn - \by_i)$ & Plug-in bias correction \\
$\Tnabc = \Tn - \Delta_n$ & ABC-corrected centroid \\
$\mu_w = \EE[w]$ & Population mean kernel weight \\
$\boldsymbol\mu_v = \EE[w\by]$ & Population weighted mean \\
$\Cov(w,\by) = \EE[(w-\EE w)(\by-\EE \by)]$ & Vector-valued covariance ($\in\RR^D$) \\
$\Var(X) = \EE\|X-\EE X\|^2$ & Variance (trace of covariance for vector $X$) \\
$M,\,C$ & Boundedness constants ($w \leq M$, $\|\by\| \leq C$) \\
$\tau$ & Temperature parameter \\
$D$ & Dimension \\
\bottomrule
\end{tabular}%
\end{table}


\section{Proofs for \texorpdfstring{\Cref{sec:bias}}{Section 3} (The Bias Problem)}
\label{app:proofs_sec3}

\subsection{Proof of \texorpdfstring{\Cref{thm:ratio_bias}}{Theorem 3.3} (Leading-Order Bias)}
\label{app:delta}

\medskip\noindent
\textbf{\Cref{thm:ratio_bias} (Leading-order bias, restated).}
\emph{Let $\by_1,\dots,\by_n\overset{\mathrm{i.i.d.}}{\sim}p$ in $\RR^D$ with
$w=k(\bx,\by)\in[0,M]$ and $\|\by\|\leq C$ almost surely for some constants $M,C>0$, and $\Pr(w>0)=1$.  Then, for fixed $\bx$ with $\mu_w := \EE[w]>0$,}
\begin{equation*}
  \EE[\Tn] - \thetapop
  \;=\; -\,\frac{1}{n}\,
    \frac{\EE\!\bigl[w^2(\by-\thetapop)\bigr]}{\mu_w^2}
  \;+\; O(n^{-2}).
\end{equation*}

\medskip

\begin{proof}
We give the full proof using the multivariate delta method, which
makes the remainder control explicit.

\paragraph{Setup.}
Recall $\Tn = \bar\bv / \bar{w}$, where
$\bar\bv = n^{-1}\sum_{i=1}^n w_i\by_i$ and
$\bar{w} = n^{-1}\sum_{i=1}^n w_i$.
Define the full-distribution expectations $\mu_w := \EE[w]$,
$\boldsymbol\mu_v := \EE[w\by]$, and the target
$\thetapop := \boldsymbol\mu_v / \mu_w$.
For the vector-valued map $g(\bv,w)=\bv/w$, partial derivatives in $\bv$ are linear maps on $\RR^D$ (e.g., $\partial g/\partial \bv$ acts as $(1/\mu_w)\,I_D$), and $\Cov(w\by,w)$ denotes the componentwise vector $\EE[(w\by-\boldsymbol\mu_v)(w-\mu_w)] \in \RR^D$.

\paragraph{Step 1. Define the smooth map and expand.}
Let $g(\bv, w) := \bv / w$, so that $\Tn = g(\bar\bv, \bar{w})$ and
$\thetapop = g(\boldsymbol\mu_v, \mu_w)$.
Taylor-expanding $g$ around $(\boldsymbol\mu_v, \mu_w)$ to second order,
\begin{align}\label{eq:delta_taylor}
  \Tn
  &= g(\boldsymbol\mu_v,\mu_w)
    + \frac{\partial g}{\partial \bv}\,(\bar\bv - \boldsymbol\mu_v)
    + \frac{\partial g}{\partial w}\,(\bar{w} - \mu_w) \notag\\
  &\quad
    + \frac{1}{2}\,\frac{\partial^2 g}{\partial w^2}\,
      (\bar{w} - \mu_w)^2
    + \frac{\partial^2 g}{\partial \bv\,\partial w}\,
      (\bar\bv - \boldsymbol\mu_v)(\bar{w} - \mu_w)
    + R_n,
\end{align}
where $R_n := \Tn - P_2(\bar\bv, \bar{w})$ is the exact second-order Taylor residual, with $P_2$ the degree-$2$ Taylor polynomial of $g$ at $(\boldsymbol\mu_v,\mu_w)$. We show in the ``Remainder bound'' paragraph below that $\|\EE[R_n]\|=O(n^{-2})$ under the stated assumptions.

\paragraph{Step 2. Compute partial derivatives.}
For $g(\bv,w) = \bv/w$, evaluating at $(\boldsymbol\mu_v, \mu_w)$ yields
\begin{equation}\label{eq:partials}
  \frac{\partial g}{\partial \bv} = \frac{I_D}{\mu_w}, \quad
  \frac{\partial g}{\partial w} = -\frac{\boldsymbol\mu_v}{\mu_w^2}
    = -\frac{\thetapop}{\mu_w}, \quad
  \frac{\partial^2 g}{\partial w^2}
    = \frac{2\boldsymbol\mu_v}{\mu_w^3}
    = \frac{2\thetapop}{\mu_w^2}, \quad
  \frac{\partial^2 g}{\partial \bv\,\partial w}
    = -\frac{I_D}{\mu_w^2}.
\end{equation}
Note that $\partial^2 g/\partial \bv^2 = 0$ (linearity in $\bv$), so no quadratic-in-$\bv$ term appears in~\eqref{eq:delta_taylor}.

\paragraph{Step 3. Take expectations.}
The first-order terms vanish since
$\EE[\bar\bv - \boldsymbol\mu_v] = \mathbf{0}$ and
$\EE[\bar{w} - \mu_w] = 0$.
The bias thus comes entirely from the second-order terms,
\begin{equation}\label{eq:delta_bias_raw}
  \EE[\Tn] - \thetapop
  = \frac{\thetapop}{\mu_w^2}\,\EE\bigl[(\bar{w}-\mu_w)^2\bigr]
  - \frac{1}{\mu_w^2}\,
    \EE\bigl[(\bar\bv - \boldsymbol\mu_v)(\bar{w}-\mu_w)\bigr]
  + O(n^{-2}).
\end{equation}

\paragraph{Step 4. Evaluate covariances using the i.i.d.\ structure.}
Since $w_1,\dots,w_n$ are i.i.d., we have
\begin{align}
  \EE\bigl[(\bar{w}-\mu_w)^2\bigr]
    &= \Var(\bar{w})
    = \frac{1}{n}\,\Var(w)
    = \frac{1}{n}\bigl(\EE[w^2]-\mu_w^2\bigr),
    \label{eq:delta_varw}\\[4pt]
  \EE\bigl[(\bar\bv-\boldsymbol\mu_v)(\bar{w}-\mu_w)\bigr]
    &= \frac{1}{n}\,\Cov(w\by,\,w)
    = \frac{1}{n}\bigl(\EE[w^2\by]-\boldsymbol\mu_v\,\mu_w\bigr).
    \label{eq:delta_covvw}
\end{align}
The cross-covariance reduces to a single-sample quantity because for
$i \neq j$ independence gives
$\EE[(w_i\by_i - \boldsymbol\mu_v)(w_j - \mu_w)] = \mathbf{0}$.

\paragraph{Step 5. Simplify to the displacement form.}
Substituting \eqref{eq:delta_varw} and \eqref{eq:delta_covvw} into
\eqref{eq:delta_bias_raw} yields
\begin{align}
  \EE[\Tn] - \thetapop
  &= \frac{1}{n\mu_w^2}\Bigl[
    \thetapop\bigl(\EE[w^2]-\mu_w^2\bigr)
    - \bigl(\EE[w^2\by]-\boldsymbol\mu_v\,\mu_w\bigr)
  \Bigr] + O(n^{-2}) \notag\\
  &= \frac{1}{n\mu_w^2}\Bigl[
    \thetapop\,\EE[w^2] - \EE[w^2\by]
    \;\underbrace{-\;\thetapop\mu_w^2 + \boldsymbol\mu_v\,\mu_w}_{
      =\,0\;\text{since } \boldsymbol\mu_v = \mu_w\thetapop}
  \Bigr] + O(n^{-2}) \notag\\
  &= -\frac{1}{n\mu_w^2}\,\EE\bigl[w^2(\by - \thetapop)\bigr]
    + O(n^{-2}).
    \label{eq:delta_final}
\end{align}
The cancellation in the second line is exact, since
$\boldsymbol\mu_v\,\mu_w - \thetapop\mu_w^2
= \mu_w(\boldsymbol\mu_v - \thetapop\mu_w) = 0$
by the definition $\thetapop = \boldsymbol\mu_v/\mu_w$.

\paragraph{Remainder bound.}
We show $\|\EE[R_n]\|=O(n^{-2})$ by splitting
$\EE[R_n]=\EE[R_n\,\mathbf{1}_{G_n}]+\EE[R_n\,\mathbf{1}_{G_n^c}]$
on the good event $G_n := \{\bar w \geq \mu_w/2\}$.
On $G_n$, the map $g(\bv,w)=\bv/w$ is smooth with uniformly bounded partials, while
$G_n^c$ has super-polynomially small probability.

\emph{Step 1. Dispose of $G_n^c$.}
Since $w\in[0,M]$, Hoeffding's inequality~\citep{hoeffding1963} with
$t=\mu_w/2$ gives
\begin{equation}\label{eq:Gn_tail}
  \Pr(G_n^c)\;\leq\;2\,\exp\!\left(-\,\frac{n\,\mu_w^2}{2M^2}\right),
\end{equation}
which decays faster than any polynomial in $n$.

The assumption $\Pr(w>0)=1$ gives $\bar w>0$ a.s., so $\Tn$ is a convex combination of the $\by_i$ with $\|\Tn\|\leq C$. 
Meanwhile $P_2$ is a fixed polynomial in $(\bar\bv,\bar w)$ with coefficients that are rational functions of $(\boldsymbol\mu_v,\mu_w)$, finite since $\mu_w>0$, evaluated on the bounded region $\|\bar\bv\|\leq MC$, $\bar w\leq M$. 
Therefore both $\Tn$ and $P_2(\bar\bv,\bar w)$ are a.s.\ bounded. Triangle inequality then gives $\|R_n\|\leq \|\Tn\|+\|P_2\|\leq C_0$ a.s.\ for some $C_0(M,C,\mu_w,\|\thetapop\|)$, and therefore
\begin{equation*}
  \|\EE[R_n\,\mathbf{1}_{G_n^c}]\|
  \;\leq\; C_0\,\Pr(G_n^c)
  \;=\; O(n^{-2}).
\end{equation*}

\emph{Step 2. Taylor expansion on $G_n$.}
Let $\boldsymbol Z_i := (w_i\by_i-\boldsymbol\mu_v,\,w_i-\mu_w)\in\RR^{D+1}$ denote the centred $i$-th sample contribution ($\EE[\boldsymbol Z_i]=\mathbf 0$), and write $\boldsymbol{\xi}_n := (\bar\bv-\boldsymbol\mu_v,\,\bar w-\mu_w) = n^{-1}\sum_{i=1}^n \boldsymbol Z_i$, with coordinates indexed by $\ell\in\{1,\dots,D+1\}$.
By Taylor's theorem, expanded to fourth order,
\begin{equation}\label{eq:Rn_expand}
  R_n \;=\; \underbrace{\tfrac{1}{6}\!\sum_{\ell_1,\ell_2,\ell_3}\!
   \partial^3_{\ell_1\ell_2\ell_3} g(\boldsymbol\mu_v,\mu_w)\;
   \xi_n^{(\ell_1)}\xi_n^{(\ell_2)}\xi_n^{(\ell_3)}}_{=:\,R_n^{(3)}}
   \;+\; R_n^{(4)},
\end{equation}
where the third-order partials are evaluated at $(\boldsymbol\mu_v,\mu_w)$, and $R_n^{(4)}$ is the fourth-order remainder.

Every $k$-th partial of $g(\bv,w)=\bv/w$ is a polynomial in $\bv$ divided by $w^{k+1}$, so on $G_n$ (where $\bar w\geq\mu_w/2$ and $\|\bv\|\leq MC$) all third- and fourth-order partials along the segment are uniformly bounded.

\emph{Step 3. Cubic and quartic expectations are $O(n^{-2})$.}
Recall from Step~2 that $\xi_n^{(\ell)} = n^{-1}\sum_{i=1}^n Z_i^{(\ell)}$
with $\EE[Z_i^{(\ell)}]=0$ and bounded fourth moments
(from $w\in[0,M]$ and $\|\by\|\leq C$).
A direct index count then gives the following.
\begin{itemize}\itemsep2pt
  \item \textbf{Cubic.}
    \(\displaystyle
      \EE\!\bigl[\xi_n^{(\ell_1)}\xi_n^{(\ell_2)}\xi_n^{(\ell_3)}\bigr]
      = \frac{1}{n^3}\sum_{i,j,k} \EE\!\bigl[Z_i^{(\ell_1)}Z_j^{(\ell_2)}Z_k^{(\ell_3)}\bigr]
      = \frac{1}{n^3}\cdot n \cdot \EE\!\bigl[Z_1^{(\ell_1)}Z_1^{(\ell_2)}Z_1^{(\ell_3)}\bigr]
      = O(n^{-2})\),
    since terms with any non-matching index factor through
    $\EE[Z_i^{(\ell)}]=0$. Only the $n$ diagonal terms $i=j=k$ survive,
    each a bounded third cross-moment.

    Since $R_n^{(3)}$ has sample-independent coefficients, summing the cubic moments
    over the $(D+1)^3$ index triples gives
    $\|\EE[R_n^{(3)}]\|=O(n^{-2})$. Writing
    $\EE[R_n^{(3)}\,\mathbf{1}_{G_n}]
    =\EE[R_n^{(3)}]-\EE[R_n^{(3)}\,\mathbf{1}_{G_n^c}]$
    and noting that $R_n^{(3)}$ is a.s.\ bounded (a polynomial in the a.s.\ bounded $\boldsymbol\xi_n$), the second term is $O(\Pr(G_n^c))=O(n^{-2})$.
    Hence $\|\EE[R_n^{(3)}\,\mathbf{1}_{G_n}]\|=O(n^{-2})$.
  \item \textbf{Quartic.}
  By a coordinate-wise Lagrange remainder form, each component of $R_n^{(4)}$ equals $\frac{1}{4!}\sum_{\ell_1,\ell_2,\ell_3,\ell_4}
  \partial^4_{\ell_1\ell_2\ell_3\ell_4} g(c_n)\,
  \xi_n^{(\ell_1)}\xi_n^{(\ell_2)}\xi_n^{(\ell_3)}\xi_n^{(\ell_4)}$
  for some $c_n$ (allowed to differ per coordinate) on the segment joining $(\boldsymbol\mu_v,\mu_w)$
  to $(\bar\bv,\bar w)$. On $G_n$, the $w$-coordinate of $c_n$
  is $\geq\mu_w/2$, so the fourth-order partials along the segment are uniformly bounded, giving $\|R_n^{(4)}\mathbf{1}_{G_n}\| = O(\|\boldsymbol\xi_n\|^4)$ pointwise.
  Each fourth-order monomial $\EE[\xi_n^{(\ell_1)}\cdots\xi_n^{(\ell_4)}]$
  is $O(n^{-2})$, since terms with any singleton index vanish by centring
  and the survivors (two matched pairs, $O(n^2)$ choices each $O(n^{-4})$, or all four matched, $n$ choices each $O(n^{-4})$) each contribute $O(n^{-2})$. By Cauchy--Schwarz, $\|\boldsymbol\xi_n\|^4 \leq (D+1)\sum_\ell (\xi_n^{(\ell)})^4$, so $\EE\|\boldsymbol\xi_n\|^4=O(n^{-2})$. Jensen's inequality $\|\EE[X]\|\leq\EE\|X\|$ then gives
  $\|\EE[R_n^{(4)}\mathbf{1}_{G_n}]\|\leq \EE\|R_n^{(4)}\mathbf{1}_{G_n}\| = O(\EE\|\boldsymbol\xi_n\|^4) = O(n^{-2})$.
\end{itemize}

Combining the three bounds above, $\|\EE[R_n]\|\leq C_R/n^2$ for a constant $C_R$ depending on $M$, $C$, $\mu_w$, $D$,
and $\|\thetapop\|$.
\end{proof}


\section{Proofs for \texorpdfstring{\Cref{sec:abc}}{Section 4} (Analytical Bias Correction)}
\label{app:proofs_sec4}

\subsection{Proof of \texorpdfstring{\Cref{thm:abc}}{Theorem 4.1} (Bias Reduction)}
\label{app:abc_bias}

\medskip\noindent
\textbf{\Cref{thm:abc} (Bias reduction, restated).}
\emph{Under the assumptions of \Cref{thm:ratio_bias} (i.e.,
$w\in[0,M]$ a.s., $\|\by\|\leq C$ a.s., $\Pr(w>0)=1$, $\EE[w]>0$):}
\begin{equation*}
  \EE\bigl[\Tnabc - \thetapop\bigr] = O(n^{-2}).
\end{equation*}

\medskip

\begin{proof}
Since $\Tnabc = \Tn - \Delta_n$, it suffices to show
\begin{equation}\label{eq:abc_goal}
  \EE[\Delta_n] \;=\; \bigl(\EE[\Tn] - \thetapop\bigr) + O(n^{-2}).
\end{equation}
Then $\EE[\Tnabc - \thetapop]
= (\EE[\Tn]-\thetapop) - \EE[\Delta_n] = O(n^{-2})$.

All Taylor expansions and delta-method bounds below are made rigorous by
the good-event splitting $G_n = \{\bar w \geq \mu_w/2\}$ introduced in
the proof of \Cref{thm:ratio_bias}. On $G_n$ the maps $1/\bar w^k$ and
$g(u,w)=u/w^2$ are Lipschitz with constants depending only on $M$ and
$\mu_w$, so Taylor remainders and delta-method variance bounds hold
deterministically. Off $G_n$ the relevant quantities are uniformly bounded, and $\Pr(G_n^c)$ is super-polynomially small by Hoeffding, so
off-$G_n$ contributions are absorbed into the $O(n^{-2})$ remainder. We
write the expansions pointwise for brevity.

\paragraph{Step 1. Decompose $\Delta_n$.}
Writing $\Tn - \by_i = (\Tn - \thetapop) + (\thetapop - \by_i)$ and
summing with weights $\alpha_i^2$:
\begin{equation}\label{eq:delta_decomp}
  \Delta_n
  \;=\; \underbrace{\sum_{i=1}^n \alpha_i^2(\thetapop - \by_i)}_{(A)}
  \;+\; \underbrace{\Bigl(\sum_{i=1}^n \alpha_i^2\Bigr)(\Tn - \thetapop)}_{(B)}.
\end{equation}
We show $\EE[(A)] = (\EE[\Tn] - \thetapop) + O(n^{-2})$ and
$\EE[(B)] = O(n^{-2})$.

\paragraph{Step 2. Analyze term (A).}
Writing $\alpha_i^2 = w_i^2/(\sum_j w_j)^2 = w_i^2/(n\bar{w})^2$ with
$\bar{w} = n^{-1}\sum_j w_j$:
\[
  (A) \;=\; \frac{1}{\bar{w}^2} \cdot
    \underbrace{\frac{1}{n}
      \cdot \frac{1}{n}\sum_{i=1}^n w_i^2(\thetapop - \by_i)}_{
      =\,(1/n)\,\bar{U}_n},
  \qquad
  \bar{U}_n := \frac{1}{n}\sum_{i=1}^n w_i^2(\thetapop - \by_i).
\]
Taylor-expand $1/\bar{w}^2$ around $\mu_w$ with
$\epsilon := \bar{w} - \mu_w = O_p(n^{-1/2})$ by Chebyshev's inequality:
\[
  \frac{1}{\bar{w}^2}
  \;=\; \frac{1}{\mu_w^2}
    \Bigl(1 - \frac{2\epsilon}{\mu_w} + O_p(\epsilon^2)\Bigr).
\]
Multiplying:
\[
  (A) \;=\; \frac{\bar{U}_n}{n\mu_w^2}
    \;-\; \frac{2\,\bar{U}_n\,\epsilon}{n\mu_w^3}
    \;+\; O_p\!\Bigl(\frac{\|\bar{U}_n\|\,\epsilon^2}{n}\Bigr).
\]

We take expectations term by term.
\begin{itemize}
\item \emph{Leading term.}
  $\EE[\bar{U}_n] = \EE[w^2(\thetapop-\by)]
  = -\EE[w^2(\by-\thetapop)]$.
  So the first term equals
  $-\EE[w^2(\by-\thetapop)]/(n\mu_w^2)$.
  By \Cref{thm:ratio_bias}, this equals $\EE[\Tn] - \thetapop + O(n^{-2})$.
\item \emph{Cross term.}
  Expanding $\bar{U}_n\,\epsilon = n^{-2}\sum_{i,j} w_i^2(\thetapop-\by_i)(w_j-\mu_w)$, the $i\neq j$ terms vanish by independence (second factor is centred), leaving
  $\EE[\bar U_n\,\epsilon] = n^{-1}\,\EE[w^2(\thetapop-\by)(w-\mu_w)] = n^{-1}\,\Cov\bigl(w^2(\thetapop-\by),\,w\bigr) = O(n^{-1})$.
  Contribution $O(n^{-1})/n = O(n^{-2})$.
\item \emph{Higher-order term.}
  Under $w\in[0,M]$ and $\|\by\|\leq C$, $\|\bar{U}_n\|\leq M^2(\|\thetapop\|+C)$ a.s.\ (as $\bar U_n = n^{-1}\sum_i w_i^2(\thetapop-\by_i)$ is bounded uniformly), so
  $\EE[\|\bar{U}_n\|\,\epsilon^2] \leq M^2(\|\thetapop\|+C)\,\EE[\epsilon^2] = O(n^{-1})$ since $\EE[\epsilon^2]=\Var(\bar w)=O(n^{-1})$.
  Contribution $O(n^{-2})$.
\end{itemize}
Combining, $\EE[(A)] = (\EE[\Tn] - \thetapop) + O(n^{-2})$.

\paragraph{Step 3. Analyze term (B).}
Let $A_n := \sum_i\alpha_i^2$ and $B_n := \Tn - \thetapop$.
Decompose the expectation of a product:
\[
  \EE[(B)]
  \;=\; \EE[A_n]\,\EE[B_n] \;+\; \Cov(A_n, B_n).
\]


\emph{Mean-product.}
Since $w_i \leq M$, $w_i^2 \leq M w_i$, hence $\sum_i w_i^2 \leq M \sum_i w_i$. Therefore $A_n = \frac{\sum_i w_i^2}{(\sum_i w_i)^2} \leq \frac{M}{\sum_i w_i}$. On the good event $G_n = \{\bar w \geq \mu_w/2\}$ this gives $A_n \mathbf{1}_{G_n}
\leq 2M/(n\mu_w)$. Off $G_n$, $A_n \leq 1$ trivially, and with $\Pr(G_n^c) = O(n^{-k})$ for all $k$ from~\eqref{eq:Gn_tail}, $\EE[A_n] = O(n^{-1})$.
By \Cref{thm:ratio_bias}, $\EE[B_n] = O(n^{-1})$.
Hence $\EE[A_n]\,\EE[B_n] = O(n^{-2})$.

\emph{Covariance.}
Since $B_n$ is vector-valued, apply Cauchy--Schwarz componentwise:
$\|\Cov(A_n, B_n)\| \leq \sqrt{\Var(A_n)}\,\sqrt{\EE\|B_n-\EE B_n\|^2}$.
Writing $A_n = (1/n)\,g(\bar u_n, \bar w)$ with $g(u,w)=u/w^2$ Lipschitz
on $G_n$, the delta method gives $\Var(A_n) = O(n^{-3})$. Likewise
$\Tn = \bar\bv/\bar w$ is Lipschitz on $G_n$, so
$\EE\|B_n-\EE B_n\|^2 = O(n^{-1})$. Therefore
$\|\Cov(A_n, B_n)\| = O(n^{-2})$.

Combining, $\EE[(B)] = O(n^{-2})$.

\paragraph{Step 4. Conclude.}
From Steps 2 and 3,
$\EE[\Delta_n] = \EE[(A)] + \EE[(B)] = (\EE[\Tn] - \thetapop) + O(n^{-2})$,
which is \eqref{eq:abc_goal}.  Therefore
$\EE[\Tnabc - \thetapop] = O(n^{-2})$.
\end{proof}

\subsection{Proof of \texorpdfstring{\Cref{prop:variance}}{Proposition 4.3} (No First-Order Total-Variance Increase)}
\label{app:variance}

\medskip\noindent
\textbf{\Cref{prop:variance} (No first-order total-variance increase, restated).}
\emph{Under the conditions of \Cref{thm:abc} (i.e., $w\in[0,M]$ a.s., $\|\by\|\leq C$ a.s., $\Pr(w>0)=1$, $\EE[w]>0$):}
\begin{equation*}
  \EE\bigl[\|\Tnabc-\EE[\Tnabc]\|^2\bigr]
  \;=\; \EE\bigl[\|\Tn-\EE[\Tn]\|^2\bigr] + O(n^{-3/2})
  \;=\; O(n^{-1}).
\end{equation*}

\medskip

\begin{proof}
In this proof, $\Var(X)$ denotes $\EE\|X - \EE[X]\|^2$. This equals the trace of the covariance matrix for a vector $X$ (equivalently $\sum_d \Var(X_d)$) and the usual variance for a scalar $X$.
We first bound $\Var(\Tn)$, then decompose $\Var(\Tnabc)$ via $\Tnabc = \Tn - \Delta_n$.

\paragraph{Step 0. Baseline variance.}
By the optimality of the mean in $L^2$, $\Var(\Tn)=\EE[\|\Tn-\EE[\Tn]\|^2]\leq\EE[\|\Tn-\thetapop\|^2]$.
It therefore suffices to show $\EE[\|\Tn-\thetapop\|^2]=O(n^{-1})$.

\emph{On $G_n$.}
On $G_n=\{\bar w\geq\mu_w/2\}$, since $\|\bar\bv\|\leq C\bar w\leq CM$ and $\|\boldsymbol\mu_v\|\leq CM$, the Taylor segment from $(\boldsymbol\mu_v,\mu_w)$ to $(\bar\bv,\bar w)$ lies in a region where $g(\bv,w)=\bv/w$ has uniformly bounded second derivatives.
The first-order Taylor expansion around $(\boldsymbol\mu_v,\mu_w)$ gives
\[
  \Tn - \thetapop = \frac{\bar\bv-\boldsymbol\mu_v}{\mu_w} - \frac{\thetapop(\bar w-\mu_w)}{\mu_w} + R_n',
\]
with $\|R_n'\|\leq C_1\bigl(\|\bar\bv-\boldsymbol\mu_v\|+|\bar w-\mu_w|\bigr)^2$ uniformly on $G_n$ for a constant $C_1$ independent of $n$.
Centred i.i.d.\ sample means of bounded random variables have fourth central moments $O(n^{-2})$, so $\EE[(\|\bar\bv-\boldsymbol\mu_v\|+|\bar w-\mu_w|)^4]=O(n^{-2})$ and hence $\EE[\|R_n'\|^2\,\mathbf{1}_{G_n}]=O(n^{-2})$.
Combined with $\EE\|\bar\bv-\boldsymbol\mu_v\|^2,\EE(\bar w-\mu_w)^2=O(n^{-1})$, the bound $\|a+b\|^2\leq 2(\|a\|^2+\|b\|^2)$ yields
\[
  \EE\!\bigl[\|\Tn-\thetapop\|^2\,\mathbf{1}_{G_n}\bigr]=O(n^{-1}).
\]

\emph{Off $G_n$.}
Since $\|\by\|\leq C$ a.s.\ and $\Tn$ is a convex combination of the $\by_i$, $\|\Tn\|\leq C$ and hence $\|\Tn-\thetapop\|\leq C+\|\thetapop\|$.
By~\eqref{eq:Gn_tail}, $\EE[\|\Tn-\thetapop\|^2\,\mathbf{1}_{G_n^c}]\leq(C+\|\thetapop\|)^2\,\Pr(G_n^c)=o(n^{-k})$ for every $k\geq1$.

Combining, $\Var(\Tn)\leq\EE[\|\Tn-\thetapop\|^2]=O(n^{-1})$.

Since $\Tnabc = \Tn - \Delta_n$,
\begin{equation}\label{eq:var_decomp}
  \Var(\Tnabc)
  = \Var(\Tn)
  - 2\,\EE\bigl[\langle \Tn - \EE[\Tn],\;
    \Delta_n - \EE[\Delta_n] \rangle\bigr]
  + \Var(\Delta_n).
\end{equation}
It suffices to show that both $\Var(\Delta_n)$ and the cross term are
$O(n^{-3/2})$.

\paragraph{Step 1. Correction variance.}
Write $A_n := \sum_i \alpha_i^2 = (\sum_i w_i^2)/(\sum_i w_i)^2$, as also defined in the proof of \Cref{thm:abc}.
We first show
\begin{equation}\label{eq:An2_bound}
  \EE[A_n^2]\;=\;O(n^{-2}).
\end{equation}
From the proof of \Cref{thm:abc}, $A_n\mathbf{1}_{G_n}\leq 2M/(n\mu_w)$, so $\EE[A_n^2\,\mathbf{1}_{G_n}]\leq 4M^2/(n^2\mu_w^2) = O(n^{-2})$.
Off $G_n$, $A_n\leq\sum_i\alpha_i=1$, so $\EE[A_n^2\,\mathbf{1}_{G_n^c}]\leq\Pr(G_n^c)=o(n^{-k})$ for any $k$ by~\eqref{eq:Gn_tail}.
Combining yields~\eqref{eq:An2_bound}.

Since $\Tn$ lies in the convex hull of $\{\by_i\}$,
each $\by_i - \Tn$ is bounded by $\|\by_i\|+\|\Tn\|\leq 2C$ almost surely under the assumption $\|\by\|\leq C$.
Applying the triangle inequality to $\Delta_n = \sum_i\alpha_i^2(\Tn-\by_i)$,
\[
  \|\Delta_n\|
  \;\leq\; \sum_i\alpha_i^2\,\|\by_i-\Tn\|
  \;\leq\; 2C\,A_n,
\]
so $\|\Delta_n\|^2\leq 4C^2\,A_n^2$.  Taking expectations and using~\eqref{eq:An2_bound},
\begin{equation}\label{eq:delta_l2}
  \EE\bigl[\|\Delta_n\|^2\bigr]
  \;\leq\; 4C^2\,\EE[A_n^2]
  \;=\; O(n^{-2}).
\end{equation}
Hence $\Var(\Delta_n)\leq\EE[\|\Delta_n\|^2]=O(n^{-2})$.

\paragraph{Step 2. Bound the cross term.}
By the Cauchy--Schwarz inequality,
\begin{align*}
  \bigl|\EE\langle \Tn - \EE[\Tn],\, \Delta_n - \EE[\Delta_n]\rangle\bigr|
  &\;\leq\; \sqrt{\Var(\Tn)\cdot\Var(\Delta_n)} \\
  &\;\leq\; \sqrt{O(n^{-1})\cdot O(n^{-2})} \;=\; O(n^{-3/2}).
\end{align*}

\paragraph{Step 3. Combine.}
Substituting into~\eqref{eq:var_decomp},
\[
  \Var(\Tnabc)
  = \Var(\Tn) + O(n^{-3/2}) + O(n^{-2})
  = \Var(\Tn) + O(n^{-3/2}).
\]
Since $\Var(\Tn) = O(n^{-1})$, the $O(n^{-3/2})$ remainder is of smaller order, confirming that ABC preserves the same $O(n^{-1})$ leading-order variance.
\end{proof}

\paragraph{Intuition.}
The correction $\Delta_n$ has magnitude $O(1/n)$, the same order as
the bias it removes.  The statistical fluctuations of $\Tn$ are
$O(1/\sqrt{n})$, which is a factor of $\sqrt{n}$ larger.  Subtracting
an $O(1/n)$ term from an $O(1/\sqrt{n})$-noisy estimator shifts the
mean (reducing bias) without meaningfully changing the spread
(variance).


\section{Temperature Cannot Substitute for Bias Correction}
\label{app:temperature}

Since both the subsampling bias and high temperature push the centroid
toward the global mean, a natural question is whether simply lowering
$\tau$ can compensate for the bias, removing the need for an analytical
correction.  Let $\Tn(\bx;\tau)$ denote the minibatch centroid
estimator \eqref{eq:centroid} computed at temperature~$\tau$ from $n$
i.i.d.\ samples.  The question asks whether there exists a
$\tau'\neq\tau$ such that
\[
  \EE[\Tn(\bx;\tau')] \;=\; \thetapop(\bx;\tau)
  \qquad \forall\;\bx,
\]
i.e., whether training at a modified temperature can exactly undo the
bias one would incur at the intended temperature.

\subsection{The Compensation Equation}

Write $k_\tau(\bx,\by):=\exp(-\|\bx-\by\|_2/\tau)$, $w=k_\tau(\bx,\by)$, and $\thetapop(\bx;\tau) = \EE_\tau[w\by]/\EE_\tau[w]$ to make the $\tau$-dependence explicit.  By \cref{thm:ratio_bias}, the biased
estimator satisfies
\[
  \EE[\Tn(\bx;\tau)]
  \;=\; \thetapop(\bx;\tau)
  \;-\; \frac{1}{n}\,
    \frac{\EE_\tau[w^2(\by-\thetapop)]}
         {(\EE_\tau[w])^2}
  \;+\; O(n^{-2}).
\]
For a compensating shift $\tau' = \tau + \Delta$ with
$\Delta = O(n^{-1})$, first-order matching
$\EE[\Tn(\bx;\tau')] = \thetapop(\bx;\tau)$ requires
\begin{equation}\label{eq:compensation}
  \partial_\tau\thetapop(\bx;\tau)\;\Delta
  \;=\; \frac{1}{n}\,
    \frac{\EE_\tau[w^2(\by-\thetapop)]}
         {(\EE_\tau[w])^2}
\end{equation}
simultaneously for every~$\bx$.  Since $\Delta$ is a single scalar independent of $\bx$, \eqref{eq:compensation} imposes two conditions in general dimension~$D$:
\begin{enumerate}
\item $\partial_\tau\thetapop(\bx;\tau)$ and $\EE_\tau[w^2(\by-\thetapop)]$ are \emph{collinear} at every~$\bx$ (vacuous when $D=1$);
\item their proportionality constant is $\bx$-independent. Equivalently, when $D=1$, the scalar ratio
\[
  \frac{\EE_\tau[w^2(\by-\thetapop)]}{(\EE_\tau[w])^2\,\partial_\tau\thetapop(\bx;\tau)}
\]
must be the same constant for every~$\bx$.
\end{enumerate}
Both conditions depend on the local density around~$\bx$ and
generically fail, as different query points have different local weight
distributions, so their bias-to-sensitivity ratios differ.

\subsection{Effective Sample Size Cost}

Even if one were to accept an approximate (non-uniform) bias reduction
by lowering~$\tau$, there is a statistical cost.  The effective sample
size of the self-normalized estimator is
\[
  n_{\mathrm{eff}}
  \;=\; \frac{n\,(\EE[w])^2}{\EE[w^2]}
  \;=\; \frac{n}{\lambda(\bx;\tau)},
\]
where $\lambda = \EE[w^2]/(\EE[w])^2 \geq 1$.  Writing $\beta := 1/\tau$ and $\phi(t):=\log\EE[e^{-t\|\bx-\by\|_2}]$ (convex in $t$), one checks $\log\lambda(\beta) = \phi(2\beta)-2\phi(\beta)$, whose derivative $2[\phi'(2\beta)-\phi'(\beta)]\geq 0$ so $\log\lambda$ is monotonically increasing in $1/\tau$. Lowering~$\tau$ therefore always reduces $n_{\mathrm{eff}}$, concentrating the weights on fewer samples and increasing variance.  In contrast, ABC removes the leading-order bias
for a \emph{fixed}~$\tau$ without altering the weights or
sacrificing sample efficiency.

\subsection{Summary}

Temperature adjustment fails to substitute for bias correction
because:
\begin{enumerate}
  \item the compensation equation~\eqref{eq:compensation} requires a
    global alignment that generically fails across query points;
  \item lowering $\tau$ reduces effective sample size, increasing
    variance.
\end{enumerate}
ABC avoids both issues: it corrects for the fixed target
$\thetapop(\bx;\tau)$, operates per-query-point via the local weight
distribution, and is derived from the bias formula rather than a
heuristic parameter adjustment.


\section{Toy Experiment Details}
\label{app:toy}

\paragraph{Data distribution.}
The reference distribution $p$ is a four-mode isotropic Gaussian in
$\RR^2$ with centers at $(\pm 0.5, \pm 0.5)$ and per-mode standard
deviation $\sigma = 0.1$.  Each mode is selected with equal probability
$1/4$.  We draw a reference pool of $N = 10^6$ points from $p$.

\paragraph{Query points.}
We sample $200$ query points $\bx$ from the same distribution $p$ (so
queries lie near the data modes, mimicking the realistic setting where
generated samples are close to data clusters).

\paragraph{Estimation protocol.}
For each sample size $n \in \{4, 8, 16, 32, 64, 128, 256, 512\}$ and
each query $\bx$, we repeat $200{,}000$ independent trials. In each trial, we draw $n$
points from the pool (with replacement), compute both the standard
centroid $\Tn$ and the ABC-corrected centroid $\Tnabc$,
and accumulate the trial-averaged centroids.  The bias is then measured
as $\|\bar{T}_n - \thetapop\|$, where $\thetapop$ is the exact
target centroid computed from all $N = 10^6$ pool points.

\paragraph{Bias-corrected norm estimator.}
Since $\|\EE[\Tn] - \thetapop\|$ is not directly observable (we
observe $\|\bar{T}_n - \thetapop\|$, which is inflated by Monte Carlo
noise), we apply a standard variance-corrected norm estimator:
$\hat{b}^2 = \|\bar{T}_n - \thetapop\|^2 - \mathrm{tr}(\hat\Sigma)/M$,
where $\hat\Sigma$ is the sample covariance of the $M = 200{,}000$
trial centroids.  We report $\hat{b} = \sqrt{\max(0, \hat{b}^2)}$.
The plotted curve in \Cref{fig:bias_loglog} is then the arithmetic mean of the per-query $\hat{b}$ values over all $200$ sampled queries at each $n$.
Note that $\hat{b}$ is computed pointwise in~$\bx$, so the across-query aggregation is a diagnostic average rather than a uniform-in-$\bx$ statement.

\paragraph{Temperatures.}
We run the experiment at $\tau \in \{0.1, 0.2\}$ to verify that the
$O(1/n)$ and $O(1/n^2)$ scaling holds across different kernel
bandwidths.  All computations use float64 accumulators on GPU for
numerical stability.

\section{Comparison with Classical Bias-Reduction Baselines}
\label{app:baselines}

Beyond ABC, several existing methods target the leading $O(1/n)$ ratio bias of the standard SNIS centroid: the classical jackknife~\citep{quenouille1956}, a bootstrap correction, and the more recent BR-SNIS estimator~\citep{cardoso2022}.
We compare these four corrections, alongside the uncorrected Standard estimator, on the same toy setup as \Cref{sec:bias_validation}.

\paragraph{Baselines.}
Jackknife, Bootstrap, and ABC correct the standard SNIS centroid $\Tn = \sum_i \alpha_i \by_i$ on a fixed minibatch of $n$ samples through different mechanisms, whereas BR-SNIS replaces $\Tn$ with an i-SIR Markov-chain estimator whose initialization bias decays with burn-in.
All four target the leading $O(1/n)$ bias of $\Tn$.

\textbf{Jackknife}~\citep{quenouille1956}: first-order bias correction via leave-one-out recomputation.
The key observation is that if the bias of $\Tn$ is $c/n + O(1/n^2)$, the specific linear combination
$\Tjack = n\, \Tn - (n-1)\,\overline{\Tloo}$, where $\Tloo$ is the SNIS centroid with the $i$-th sample removed, cancels the leading $1/n$ term and leaves an $O(1/n^2)$ residual.
For SNIS specifically the softmax-marginalization identity $\Tloo = (\Tn - \alpha_i \by_i)/(1-\alpha_i)$ lets us compute all $n$ leave-one-out estimates at $O(nD)$ cost, though with a substantially larger constant than the standard estimator.

\textbf{Bootstrap}: Efron-style bias estimation via $B = 100$ bootstrap resamples.
For each draw of $n$ samples we resample $B$ times with replacement, compute an SNIS estimate $\Tboot$ on each replicate, and approximate the bias by $\overline{\Tboot} - \Tn$; the corrected estimator is $\Tbr = 2\,\Tn - \overline{\Tboot}$.
Cost: $O(BnD)$ per estimate.

\textbf{BR-SNIS}~\citep{cardoso2022}: iterated Sampling-Importance Resampling (i-SIR).
Each iteration concatenates the current state with $n{-}1$ fresh reference samples to form a candidate pool, selects the next state with probability proportional to the SNIS weights, and accumulates the SNIS centroid; the final estimator is the average of the SNIS centroids across $K - k_0$ post-burn-in iterations.
Viewed as a Markov-chain procedure, the chain mixes to the target distribution and the bias decays exponentially in the burn-in $k_0$.
We use $K = 10$ and $k_0 = 1$.
Cost: $O(KnD)$ per estimate, but requires $K(n{-}1){+}1$ fresh reference samples per estimate, a substantially larger sample budget than the other methods, which operate on a fixed $n$-sample pool.

\textbf{ABC (ours)}: in contrast to the three baselines above, which rely on resampling or iterative procedures, ABC uses the analytical leading-bias formula from \Cref{thm:ratio_bias} and substitutes minibatch averages for the unknown full-distribution expectations, yielding a closed-form correction
$\Tnabc = (1{-}\sum_i\alpha_i^2)\,\Tn + \sum_i \alpha_i^2 \by_i$ at $O(nD)$ cost (a single additional matmul, same order as the baseline centroid), with no extra samples or resamples required.

\paragraph{Protocol.}
We use a CPU-scale variant of the \Cref{app:toy} setup at $\tau = 0.1$.
The reference pool contains $5{\times}10^4$ points drawn from the four-mode Gaussian, and we evaluate on $200$ Gaussian query points ($\mathcal{N}(0, 0.25\,I_2)$).
For each $n \in \{4, 8, 16, 32, 64, 128, 256\}$ we run $5{,}000$ Monte Carlo trials per query, drawing $n$ samples without replacement from the pool each trial, and report the per-query bias norm $\|\EE[\Tn] - \thetapop\|$ and total variance $\sum_d \mathrm{Var}(\Tnd)$ averaged across queries.
Unlike \Cref{app:toy}, here we use the naive norm $\|\bar{T}_n - \thetapop\|$ rather than the variance-corrected estimator $\hat b$, so the reported bias values are slightly inflated by Monte Carlo noise (most noticeable for the smallest biases, e.g., BR-SNIS at large $n$).
Per-estimate wall time is the mean across trials (excluding the first $10$ warmup measurements), measured on CPU with PyTorch 2.3.

\begin{table}[t]
\centering
\caption{Bias, variance, and wall time at $n \in \{8, 32, 128\}$ ($\tau=0.1$, $5{,}000$ trials).
BR-SNIS uses $K(n{-}1){+}1$ fresh samples per estimate (with $K=10$); the other methods use $n$.}
\label{tab:baselines}
\small
\begin{tabular}{@{}llccc@{}}
\toprule
$n$ & Method & $|\mathrm{bias}|$ ($\downarrow$) & Var ($\downarrow$) & Time, \textmu s ($\downarrow$) \\
\midrule
\multirow{5}{*}{8}
  & Standard         & $6.24 \times 10^{-2}$          & $7.54 \times 10^{-2}$          & $\mathbf{5.9}$ \\
  & Jackknife        & $7.02 \times 10^{-2}$          & $1.95 \times 10^{-1}$          & 82.9 \\
  & Bootstrap        & $3.27 \times 10^{-2}$          & $1.08 \times 10^{-1}$          & 996.6 \\
  & BR-SNIS (i-SIR)  & $\mathbf{7.89 \times 10^{-3}}$ & $\mathbf{6.68 \times 10^{-3}}$ & 1527.6 \\
  & ABC (ours)       & $4.99 \times 10^{-2}$          & $8.03 \times 10^{-2}$          & 25.5 \\
\midrule
\multirow{5}{*}{32}
  & Standard         & $1.33 \times 10^{-2}$          & $8.07 \times 10^{-3}$          & $\mathbf{11.2}$ \\
  & Jackknife        & $3.40 \times 10^{-3}$          & $9.85 \times 10^{-3}$          & 136.0 \\
  & Bootstrap        & $7.07 \times 10^{-3}$          & $9.00 \times 10^{-3}$          & 1985.1 \\
  & BR-SNIS (i-SIR)  & $\mathbf{1.09 \times 10^{-3}}$ & $\mathbf{1.24 \times 10^{-3}}$ & 3521.9 \\
  & ABC (ours)       & $7.26 \times 10^{-3}$          & $8.61 \times 10^{-3}$          & 36.9 \\
\midrule
\multirow{5}{*}{128}
  & Standard         & $3.63 \times 10^{-3}$          & $1.99 \times 10^{-3}$          & $\mathbf{21.3}$ \\
  & Jackknife        & $6.20 \times 10^{-4}$          & $2.15 \times 10^{-3}$          & 325.2 \\
  & Bootstrap        & $9.94 \times 10^{-4}$          & $2.12 \times 10^{-3}$          & 3669.7 \\
  & BR-SNIS (i-SIR)  & $\mathbf{2.18 \times 10^{-4}}$ & $\mathbf{2.60 \times 10^{-4}}$ & 10185.4 \\
  & ABC (ours)       & $1.03 \times 10^{-3}$          & $2.10 \times 10^{-3}$          & 57.0 \\
\bottomrule
\end{tabular}
\end{table}

\paragraph{Findings.}
Among the fixed-$n$-pool methods, ABC attains bias comparable to Jackknife and Bootstrap with variance comparable to the Standard estimator's (which Jackknife notably inflates at small $n$) and running several times faster than either baseline, with no additional samples or resamples.
BR-SNIS attains the smallest bias but at a $K{\times}$-larger sample budget and much higher wall time.

\section{Hyperparameters and Compute}
\label{app:hyperparams}

\paragraph{Architecture.}
We use DriftDiT-Small from \citet{deng2026drifting}: a 26.6M-parameter
diffusion transformer with a frozen DINOv2 multi-scale encoder (four
layers) and bidirectional kernel normalization.  Input resolution is
$32{\times}32$.

\paragraph{Training hyperparameters.}
\Cref{tab:hyperparams} summarizes the training configuration.  All
hyperparameters except training duration follow~\citet{deng2026drifting};
training duration is scaled inversely with $n$ so that every
configuration sees the same total of 6.24M positive samples per class.

\begin{table}[h]
\centering
\caption{Training hyperparameters.}
\label{tab:hyperparams}
\begin{tabular}{@{}ll@{}}
\toprule
Hyperparameter & Value \\
\midrule
Optimizer & AdamW ($\beta_1{=}0.9$, $\beta_2{=}0.95$) \\
Weight decay & $0.01$ \\
Learning rate & $2 \times 10^{-4}$ \\
Warmup & $2\,000$ linear steps \\
EMA decay & $0.999$ \\
Drift-field temperatures $\tau$ & $\{0.02,\,0.05,\,0.2\}$ \\
Steps per epoch & $195$ \\
Epochs ($n{=}64 / 32 / 16 / 8$) & $500 / 1\,000 / 2\,000 / 4\,000$ \\
Total positive samples per class seen & $6.24$M \\
Seeds & $\{42, 43, 44\}$ \\
\bottomrule
\end{tabular}
\end{table}

\paragraph{Evaluation.}
FID is computed on $50\,000$ generated samples per checkpoint.
Evaluation frequency during training ranges from every $25$ to every $200$ epochs depending on configuration.
The smoothed trajectories in \Cref{fig:convergence} use a rolling mean over $\{7, 5, 3, 3\}$ evaluation checkpoints for $n\in\{8, 16, 32, 64\}$ respectively (smaller-$n$ panels run more epochs and are smoothed more aggressively), with bands showing mean $\pm$ max-deviation across the three seeds.

\paragraph{Hardware and software.}
All experiments run on a single NVIDIA A100 80\,GB GPU with PyTorch
and CUDA.  Average per-epoch wall time is approximately $37.1$\,s for
Standard and $42.8$\,s for ABC, with peak GPU memory of
$4\,729$ vs $4\,731$\,MB (\Cref{tab:overhead}).
Total compute for the main CIFAR-10 grid (4 sample sizes $\times$ 2 methods $\times$ 3 seeds $=$ 24 runs) is approximately $700$ A100-hours of single-GPU-equivalent time, plus additional time for ablations and toy experiments.

\section{Additional Results on CIFAR-100}
\label{app:cifar100}

As a consistency check on a harder dataset, we run single-seed ABC vs Standard on CIFAR-100 using the same DriftDiT-Small architecture and hyperparameters as \Cref{sec:setup}, except for the training epochs, which are reduced because of the higher per-epoch cost on CIFAR-100.
Compute parity with the CIFAR-10 budget is maintained for $n{=}16$, while $n{=}8$ and $n{=}32$ are budget-reduced and used only as preliminary checks.
The results show that ABC also improves both last-5 average and best FID on CIFAR-100.

\begin{table}[h]
\centering
\caption{FID on CIFAR-100.}
\label{tab:cifar100}
\begin{tabular}{@{}cccccc@{}}
\toprule
\multirow{2}{*}{$n$} & \multirow{2}{*}{Epochs} & \multicolumn{2}{c}{Last-5 average FID ($\downarrow$)} & \multicolumn{2}{c}{Best FID ($\downarrow$)} \\
\cmidrule(lr){3-4} \cmidrule(lr){5-6}
 &  & Standard & ABC & Standard & ABC \\
\midrule
$8$  & $1\,000$ & $4.17$ & $\mathbf{4.05}$\,\dnred{0.12} & $4.06$ & $\mathbf{3.74}$\,\dnred{0.32} \\
$16$         & $2\,000$ & $2.64$ & $\mathbf{2.52}$\,\dnred{0.12} & $2.52$ & $\mathbf{2.40}$\,\dnred{0.12} \\
$32$ & $500$    & $6.72$ & $\mathbf{2.75}$\,\dnred{3.97} & $4.36$ & $\mathbf{2.36}$\,\dnred{2.00} \\
\bottomrule
\end{tabular}
\end{table}

\section{Limitations and Future Work}
\label{app:limitations}

The primary contribution of this paper is the theoretical analysis of
the $O(1/n)$ subsampling bias in drifting models and the analytical
correction (ABC) that addresses it.  Our experiments validate this theory rather than pursue state-of-the-art generation quality. Compute constraints also limit the experimental scope.  Follow-up and application-oriented work
could further tune the temperature $\tau$ and other hyperparameters
in the presence of ABC to achieve better task-specific performance.

\end{document}